\newtheorem{theo}{Theorem}[section]
\newtheorem{lem}[theo]{Lemma}
\newtheorem{corollary}[theo]{Corollary}
\newtheorem{ass}[theo]{Assumption}
\theoremstyle{definition}
\newtheorem*{remark}{Remark}
\newcommand{\Bx}{\mathbf{x}}
\newcommand{\Bh}{\mathbf{h}}
\newcommand{\Btheta}{{\bm \theta}}
\newcommand{\Bpsi}{{\bm \psi}}
\DeclareRobustCommand\onedot{\futurelet\@let@token\@onedot}
\def\@onedot{\ifx\@let@token.\else.\null\fi\xspace}
\def\eg{\emph{e.g}\onedot}
\def\ie{\emph{i.e}\onedot}
\def\cf{\emph{cf}\onedot}
\def\etal{\emph{et al}\onedot}
\def\wrt{w.r.t\onedot}
\newcommand{\Rmnum}[1]{\expandafter\@slowromancap\romannumeral #1@}
\title{DAT: Improving Adversarial Robustness via Generative Amplitude Mix-up in Frequency Domain}
\author{Fengpeng Li\textsuperscript{1}\qquad Kemou Li\textsuperscript{1}\qquad Haiwei Wu\textsuperscript{2}\qquad Jinyu Tian\textsuperscript{3}\qquad Jiantao Zhou\textsuperscript{1}\footnotemark[1]
\\
  \textsuperscript{1}State Key Laboratory of Internet of Things for Smart City, University of Macau\\
  \textsuperscript{2}Department of Computer Science, City University of Hong Kong\\
  \textsuperscript{3}Faculty of Innovation Engineering, Macau University of Science and Technology\\
}
\begin{document}
\maketitle

\renewcommand{\thefootnote}{\fnsymbol{footnote}}

\footnotetext[1]{Corresponding author: Jiantao Zhou (jtzhou@um.edu.mo).}

\doparttoc 
\faketableofcontents 

\begin{abstract}
To protect deep neural networks (DNNs) from adversarial attacks, adversarial training (AT) is developed by incorporating adversarial examples (AEs) into model training. Recent studies show that adversarial attacks disproportionately impact the patterns within the phase of the sample's frequency spectrum---typically containing crucial semantic information---more than those in the amplitude, resulting in the model's erroneous categorization of AEs. We find that, by mixing the amplitude of training samples' frequency spectrum with those of distractor images for AT, the model can be guided to focus on phase patterns unaffected by adversarial perturbations. As a result, the model's robustness can be improved. Unfortunately, it is still challenging to select appropriate distractor images, which should mix the amplitude without affecting the phase patterns. To this end, in this paper, we propose an optimized \emph{Adversarial Amplitude Generator (AAG)} to achieve a better tradeoff between improving the model's robustness and retaining phase patterns. Based on this generator, together with an efficient AE production procedure, we design a new \emph{Dual Adversarial Training (DAT)} strategy. Experiments on various datasets show that our proposed DAT leads to significantly improved robustness against diverse adversarial attacks. The source code is available at \url{https://github.com/Feng-peng-Li/DAT}.  
\end{abstract}

\setcounter{tocdepth}{0}
\section{Introduction}
\label{intro}
DNNs have been successfully applied to various tasks \cite{girshick2015fast,krizhevsky2017imagenet,he2017mask}. However, recent studies reveal that DNNs are vulnerable to adversarial examples (AEs), created by applying subtle yet deceptive adversarial perturbations to benign samples \cite{DBLP:conf/iclr/MadryMSTV18,DBLP:journals/corr/SzegedyZSBEGF13}. Such vulnerabilities have sparked considerable interests, leading to the development of numerous adversarial attacks designed to deceive DNNs \cite{DBLP:conf/iclr/MadryMSTV18, DBLP:conf/cvpr/DongLPS0HL18,DBLP:conf/icml/LiLWZG19, DBLP:conf/icml/Croce020a, DBLP:conf/eccv/FanWLZLLY20,DBLP:conf/iclr/BaiWZL0X21,DBLP:conf/iclr/BaiZJXM021}. Furthermore, serious concerns about the trustworthiness of artificial intelligence have been raised, due to these fundamental vulnerabilities \cite{DBLP:journals/corr/abs-2103-01946, DBLP:conf/iclr/LiGZ023a}. To mitigate these threats, adversarial training (AT) has been developed to enhance model robustness by incorporating AEs into training through a min-max strategy \cite{DBLP:conf/iclr/MadryMSTV18,DBLP:conf/icml/ZhangYJXGJ19,DBLP:journals/corr/abs-2209-04326}. Based on the typical method, PGD-AT \cite{DBLP:conf/iclr/MadryMSTV18}, a variety of AT strategies have been devised \cite{DBLP:journals/corr/abs-2209-04326, DBLP:conf/cvpr/JiaZW0WC22, DBLP:conf/iclr/LiGZ023a} (see Appendix~\ref{sec:related work} for related works).     

For image signals transformed into the frequency domain using, \eg the discrete Fourier transform (DFT), several AT works \cite{DBLP:conf/nips/YinLSCG19,DBLP:conf/iclr/TsiprasSETM19, DBLP:conf/cvpr/WangWHX20, DBLP:conf/icassp/OlivierRS21} explore the adversarial attacks' behavior on sample's frequency spectrum. Frequency spectra consist of amplitude and phase; the amplitude typically captures stylistic information, whereas the phase encompasses richer semantics \cite{DBLP:conf/iccv/ChenPM0D021}. Recent studies \cite{DBLP:conf/cvpr/XuZ0W021,DBLP:conf/icml/Zhou0Y0L23} find, as shown in Figure~\ref{advimpact}, that adversarial attacks often severely eliminate some semantics in the phase, making it difficult for models to extract features for correctly predicting AEs, while the impacts on amplitude are relatively mild. Moreover, by forcing the model to focus on phase patterns, \cite{DBLP:conf/iccv/ChenPM0D021, DBLP:conf/icml/Zhou0Y0L23} confirm the model can learn more features unaffected by image corruptions, \eg, Gaussian noise and defocus blur, improving the model's performance on corrupted samples.


\begin{wrapfigure}{r}{0.65\textwidth}
\centering
\includegraphics[width=0.65\textwidth]{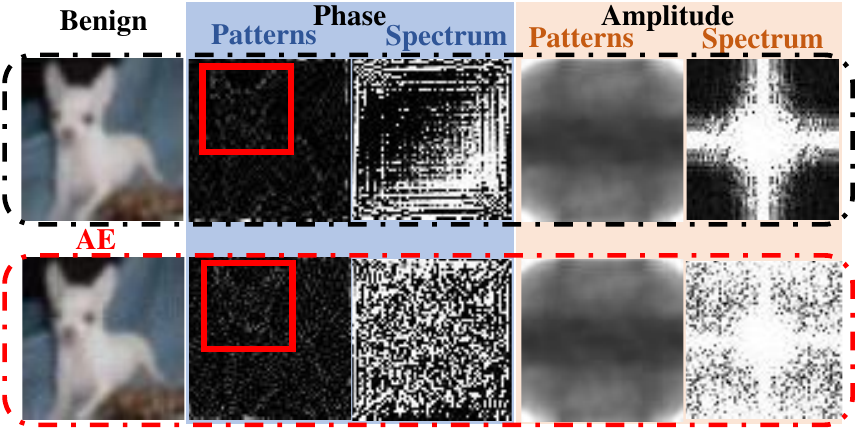}
\caption{The adversarial perturbation severely damages phase patterns (especially in red rectangular) and the frequency spectrum, while amplitude patterns are rarely impacted. The AE is generated by PGD-20 $\ell_{\infty}$-bounded with radius $\nicefrac{8}{255}$.}
\label{advimpact}
\end{wrapfigure}

To explore the impact of adversarial perturbations on phase and amplitude patterns respectively, we conduct some studies (see Sec.~\ref{moti} for details). We find that the standard model trained without AT has worse performance on samples with adversarial phase patterns than those with adversarial amplitude ones. Unlike the standard model, AT enhances the model's robustness against both phase- and amplitude-level adversarial perturbations, with a more noticeable improvement against adversarial perturbations on phase patterns. This indicates the potential for improving the model’s robustness by focusing on phase patterns unaffected by adversarial perturbations in AT. Then, by mixing the training samples' amplitude with randomly selected distractor, we observe that the robust model performance, particularly at adversarial phase patterns, is further enhanced, without impacting that at adversarial amplitude ones. This demonstrates that training samples with mixed amplitude improve the model’s performance on AEs, and maintain the model's robustness on amplitude-level.

Inspired by these observations, we in this work propose a new \emph{Dual Adversarial Training (DAT)} strategy by focusing on the phase patterns, with two adversarial procedures: adversarial amplitude generation and efficient AE production. Specifically, to guide the model to learn more phase patterns, we first attempt to mix the amplitude of training samples' frequency spectra with randomly selected distractor images. However, when the disparity between the distractor and the original image is too large, the recombined ones tend to disrupt original phase patterns, hindering the model from predicting AE correctly. Conversely, it is difficult for models to focus on phase patterns when the distractor closely resembles the original one \cite{DBLP:conf/iccv/ChenPM0D021}. To tackle this challenge, we propose an optimized \emph{Adversarial Amplitude Generator (AAG)} to synthesize an adversarial amplitude, maximizing the model loss and limiting the model fitting amplitude patterns, thereby the model focusing on phase patterns for convergence. During the training process, the AAG and robust model are optimized jointly with the original and recombined images, together with their AEs. The robust model undergoes training by empirical risk minimization, and maximizes the model loss to update the AAG adversarially. Experiments across various benchmarks against a range of adversarial attacks confirm the superior effectiveness of our proposed DAT, surpassing state-of-the-art methods in robustness with big margins.

\textbf{Contribution.} The contributions of this work can be summarized as:
\begin{itemize}[leftmargin=*]
    \item We verify that adversarial perturbations significantly influence phase patterns, resulting in the model's difficulty for predicting AEs correctly. Moreover, by mixing the amplitude of a training image with that of a distractor, we find that the model robustness against AEs can be enhanced. 
    \item We propose the novel DAT strategy with an optimized AAG to synthesize an adversarial amplitude. With the AAG, we enforce the model to better focus on phase patterns, enhancing the model's robustness. Also, an efficient AE generation module is incorporated to improve the AT's efficiency.   
    \item Experiments on multiple datasets confirm that DAT significantly enhances the model's robustness against a variety of adversarial attacks. Specifically, DAT increases the model's robustness by $\sim$2.1\% on CIFAR-10, $\sim$2.2\% on CIFAR-100, and $\sim$2.3\% on Tiny ImageNet, on average.
\end{itemize}

\textbf{Notation.}\hspace{.5em}Let $\left.\mathcal{D}=\{(\Bx_i,y_i)\}_{i=1}^N\right.$ be a benign dataset comprising $N$ samples from $c$ classes, where each sample $\left.\Bx_i\in \mathcal{X} \subseteq \mathbb{R}^{C\times H\times W}\right.$ is an image with $C$ channels, height $H$, and width $W$, and its label $\left.y_i\in[c]=\{1,\dots,c\}\right.$. $\left.f_{\Btheta}:\mathcal{X}\to\mathbb{R}^{c}\right.$ denotes a DNN function parameterized by $\Btheta \in \mathbb{R}^{d}$, and $\left.F_{\Btheta}(\Bx)=\arg\max_{y\in[c]}f_{\Btheta}(\Bx)_{y}\right.$ is the predicted label of $\Bx$. Moreover, $\left.f_{\Btheta} = g \circ h\right.$, where $h$ is the feature extractor and $g$ is the classifier. Let $\left.\mathcal{H}\subseteq\mathbb{R}^m\right.$ be the $m$-dimensional feature space, $\left.h_i:\mathcal{X}\to\mathbb{R}\right.$ be a feature mapping function, and $\Bh(\Bx)=\left.\left[h_1 (\Bx), \dots, h_m (\Bx)\right]^{\top}\in\mathcal{H}\right.$ denote the feature map of $\Bx$. Specifically, features induced from the amplitude and phase patterns of $\Bx$ are $h_a(\Bx)$ and $h_p(\Bx)$, respectively. Define $\left.\mathcal{S}_{\epsilon}[\mathbf{x}]=\{\mathbf{x}^{\prime}\colon\|\mathbf{x}^{\prime}-\mathbf{x}\|_{\infty}\leqslant \epsilon\}\right.$ as an $\ell_{\infty}$-ball centered on $\Bx$ with radius $\epsilon$. Let $\mathcal{F}(\cdot)$ and $\mathcal{F}^{-1}(\cdot, \cdot)$ denote the DFT and inverse DFT (IDFT) functions. Typically, DFT is independently applied to each channel of an image $\mathbf{x}$ within the pixel space as:
\begin{equation*}
    \mathcal{F}(\mathbf{x})(u,v)=\sum_{h=1}^{H}\sum_{w=1}^{W}\mathbf{x}(h,w)\,\mathrm{e}^{-\mathrm{i}2\pi (u\frac{h}{H}+v\frac{w}{W})},
\end{equation*}
where $(h,w)$ denotes the pixel coordinates of $\mathbf{x}$, and $\left.(u,v)\in [H] \times [W]\right.$ signifies coordinates in the frequency domain. The real and imaginary parts of $\mathcal{F}(\mathbf{x})$ are denoted by $\mathrm{Re}(\mathcal{F}(\mathbf{x}))$ and $\mathrm{Im}(\mathcal{F}(\mathbf{x}))$, respectively. Then, the amplitude spectrum $\mathcal{A}(\mathbf{x})$ and phase spectrum $\mathcal{P}(\mathbf{x})$ are
\begin{equation}
\label{fourier}
    \mathcal{A}(\mathbf{x})=\left(\mathrm{Re}^2(\mathcal{F}(\mathbf{x}))+\mathrm{Im}^2(\mathcal{F}(\mathbf{x}))\right)^{\frac{1}{2}},\quad
    \mathcal{P}(\mathbf{x})=\arctan\left(\frac{\mathrm{Im}(\mathcal{F}(\mathbf{x}))}{\mathrm{Re}(\mathcal{F}(\mathbf{x}))}\right).
\end{equation}

\begin{figure*}
  \centering
  \begin{subfigure}[b]{0.32\linewidth}
    \includegraphics[width=\linewidth]{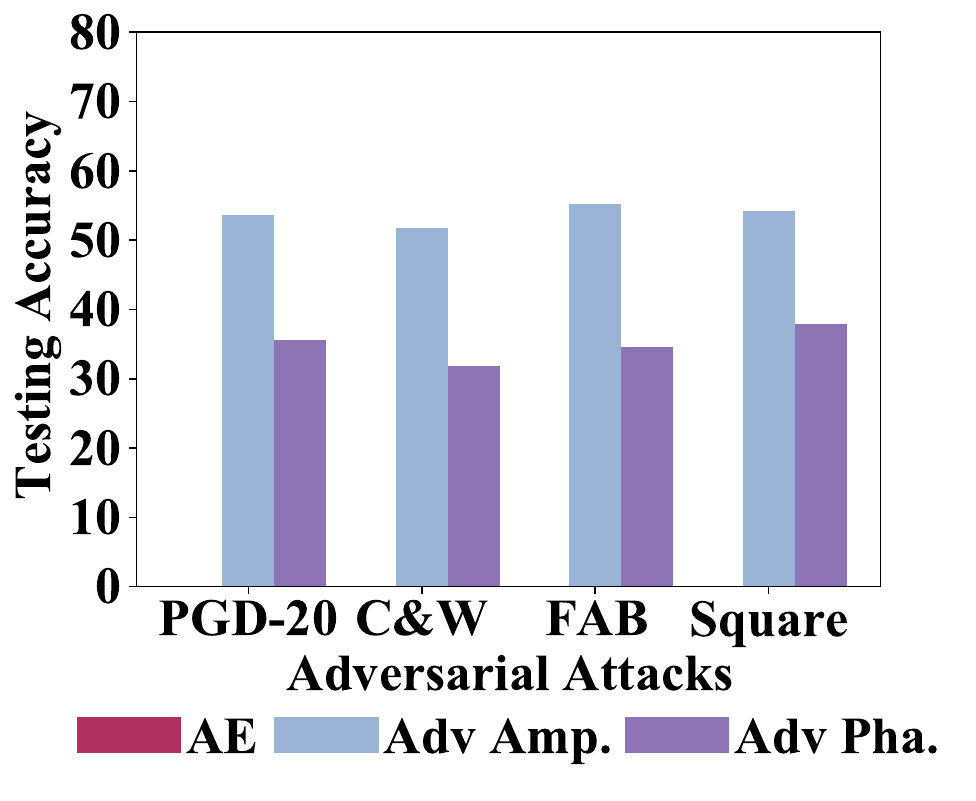}
    \caption{Standard model}
    \label{com}
  \end{subfigure}
  \hfill
  \begin{subfigure}[b]{0.32\linewidth}
    \includegraphics[width=\linewidth]{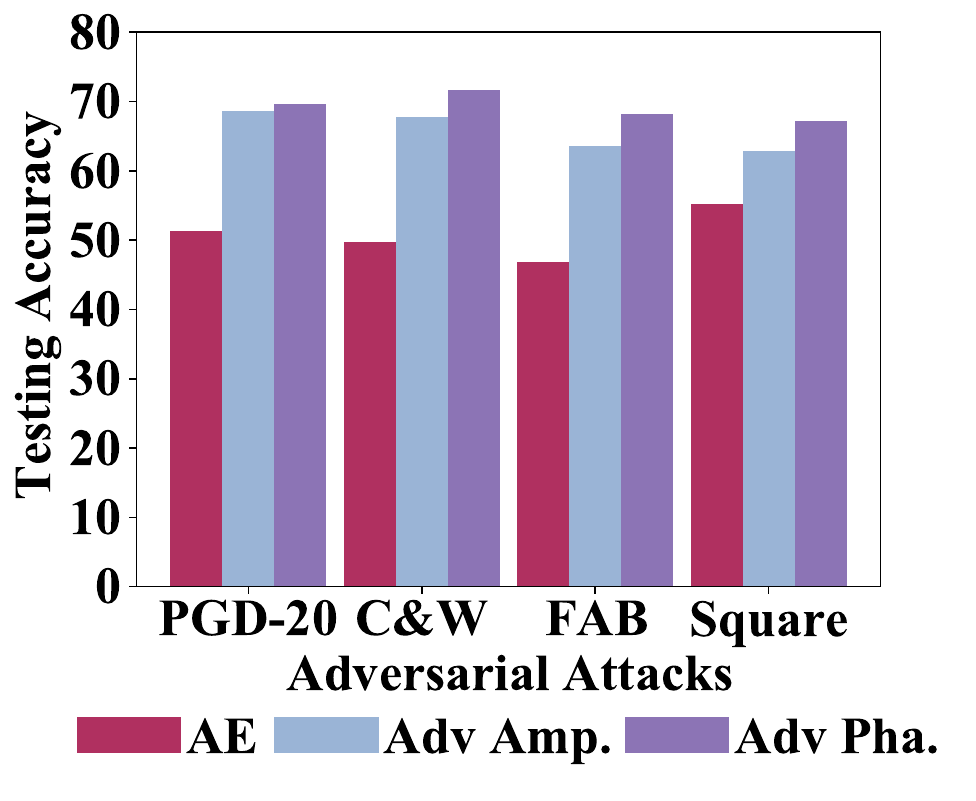}
    \caption{Robust model}
    \label{at_a}
  \end{subfigure}
  \hfill
  \begin{subfigure}[b]{0.32\linewidth}
    \includegraphics[width=\linewidth]{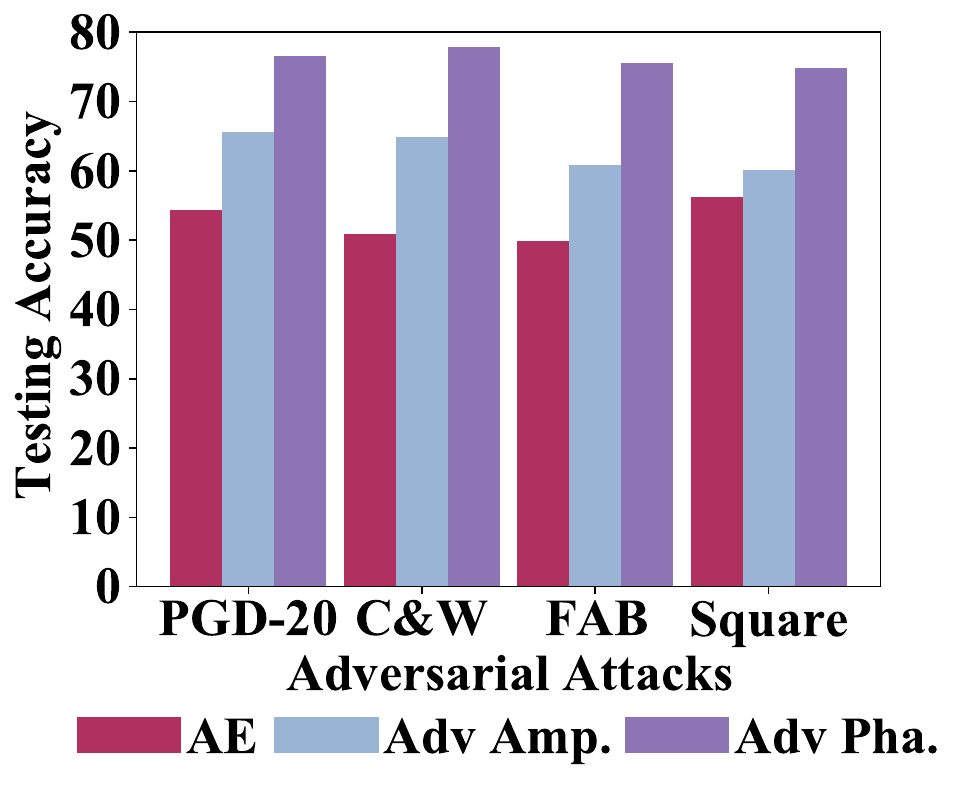}
    \caption{Perturbed model}
    \label{at_perturb}
  \end{subfigure}
  \caption{Test accuracy (\%) on CIFAR-10 of (a) standard, (b) robust, and (c) amplitude-perturbed ResNet-18. "Adv Amp.$/$Pha." refers to $\mathbf{x}^{\prime}_{amp}/\mathbf{x}^{\prime}_{pha}$. AEs are generated by PGD-20, C\&W$_\infty$, FAB, and Square, with $\ell_{\infty}$-bounded perturbation budget $\left.\epsilon=\nicefrac{8}{255}\right.$ and inner step $\alpha=\nicefrac{2}{255}$. The robust and perturbed models are trained by PGD-AT-10 following \cite{DBLP:conf/iclr/MadryMSTV18}.}
  \label{hyper}
\end{figure*}

\section{Motivation: On Improving Adversarial Robustness in Frequency Domain}
\label{moti}
To investigate the approach to enhance the model robustness and show the motivation of DAT, we perform exploration experiments in this section. As shown in Figure~\ref{advimpact}, adversarial perturbations severely compromise the semantics within phase patterns, resulting in the difficulty of the model predicting AEs correctly. Consequently, we examine the distinct effects of adversarial perturbations on amplitude and phase patterns. To achieve this target, we employ the standard and robust models, trained without and with AT as \cite{DBLP:conf/iclr/MadryMSTV18} on $\mathcal{D}_t$ (the training subset of CIFAR-10). Moreover, on training samples with perturbed amplitude, the model tends to focus on phase patterns, in order to achieve the convergence \cite{DBLP:conf/iccv/ChenPM0D021,DBLP:conf/cvpr/XuZ0W021}. Following this line, we discuss the impact of perturbing amplitude by mixing the amplitude of training samples with those of distractors randomly selected from $\mathcal{D}_t$. For $(\mathbf{x},y)\in \mathcal{D}_t$, the recombined sample $\hat{\Bx}$ is generated by amplitude-level mixing operations and IDFT, namely,  
\begin{equation}
    \hat{\mathbf{x}}=\mathcal{F}^{-1}(\lambda\cdot\mathcal{A}(\mathbf{x}_0)+(1-\lambda)\cdot \mathcal{A}(\mathbf{x}),\mathcal{P}(\mathbf{x})), \quad \lambda\sim \mathrm{Uniform}(0,1),
\end{equation}
where $\mathbf{x}_0$ is the distractor i.i.d. drawn from $\mathcal{D}_t$. We use $(\hat{\mathbf{x}},y)$ to construct a dataset $\mathcal{D}_r$ and train the perturbed model on it as \cite{DBLP:conf/iclr/MadryMSTV18}. Then, for $(\Bx,y)\in\mathcal{D}_e$ (the testing subset of CIFAR-10), we generate AE $\mathbf{x}^{\prime}$ by four representative adversarial attacks (PGD, C\&W, FAB, and Square) and utilize DFT to derive the amplitude and phase of frequency spectra of both $\mathbf{x}$ and $\mathbf{x}^{\prime}$. Furthermore, images composed of adversarial amplitude/phase and benign phase/amplitude (denoted by $\mathbf{x}^{\prime}_{amp}/\mathbf{x}^{\prime}_{pha}$) are obtained by 
\begin{equation*}
    \mathbf{x}^{\prime}_{amp}=\mathcal{F}^{-1}(\mathcal{A}(\mathbf{x}^{\prime}),\mathcal{P}(\mathbf{x})),\quad \mathbf{x}^{\prime}_{pha}=\mathcal{F}^{-1}(\mathcal{A}(\mathbf{x}),\mathcal{P}(\mathbf{x}^{\prime})).
\end{equation*}
For each $(\Bx,y)\in\mathcal{D}_e$, with every adopted adversarial attack, we use $(\Bx^{\prime},y)$, $(\Bx^{\prime}_{amp},y)$ and $(\Bx^{\prime}_{pha},y)$ to combine evaluation datasets $\mathcal{D}_{\mathrm{AE}}$, $\mathcal{D}_{amp}$ and $\mathcal{D}_{pha}$, which are used to evaluate the robustness of the standard, robust, and perturbed models, respectively. The experimental outcomes are illustrated in Figure~\ref{hyper}, from which we can draw the following conclusion:

\textbf{Impact of Adversarial Attacks.}\hspace{.5em}As shown in Figure~\ref{com}, under all four attacks, the standard model completely cannot predict samples in $\mathcal{D}_{\mathrm{AE}}$. Moreover, the standard model exhibits higher test accuracy on $\mathcal{D}_{amp}$ than that on $\mathcal{D}_{pha}$. These results suggest that adversarial attacks have a more substantial impact on the phase patterns than amplitude ones. 

\textbf{Impact of AT.}\hspace{.5em}As depicted in Figure~\ref{at_a}, AT simultaneously enhances the model's performance on $\mathcal{D}_{\mathrm{AE}}$ and $\mathcal{D}_{amp}$ and $\mathcal{D}_{pha}$, in comparison to the standard model. Notably, the robust model exhibits superior performance on $\mathcal{D}_{pha}$ over $\mathcal{D}_{amp}$, contrary to the standard model. That indicates the AT helps the model learn more phase patterns unaffected by adversarial attacks, enhancing the model's robustness on both adversarial phase patterns and AEs. The phenomena indicate that by forcing the model to focus on the phase patterns of samples, the model's robustness can be improved further.  

\textbf{Impact of Mixing Amplitude.}\hspace{.5em}As shown in Figure~\ref{at_perturb}, for the perturbed model trained with AT on $\mathcal{D}_r$, compared with the robust one, its performance on $\mathcal{D}_{\mathrm{AE}}$ and $\mathcal{D}_{pha}$ is improved further, while the performance on $\mathcal{D}_{amp}$ is rarely changed. From these results, we can conclude that mixing the amplitude with that of the randomly selected distractor can force the model to focus on phase patterns, learning more patterns within the phase unaffected by adversarial perturbations. Then, the model's robustness on AEs is improved further, while robustness on amplitude patterns is retained.

Following the insight from these studies, we propose DAT to enhance the model's robustness, mixing training sample's amplitude with an adversarial one, generated by the adversarially optimized AAG.

\begin{figure}[tbp]
    \centering
    \includegraphics[width=1.0\linewidth]{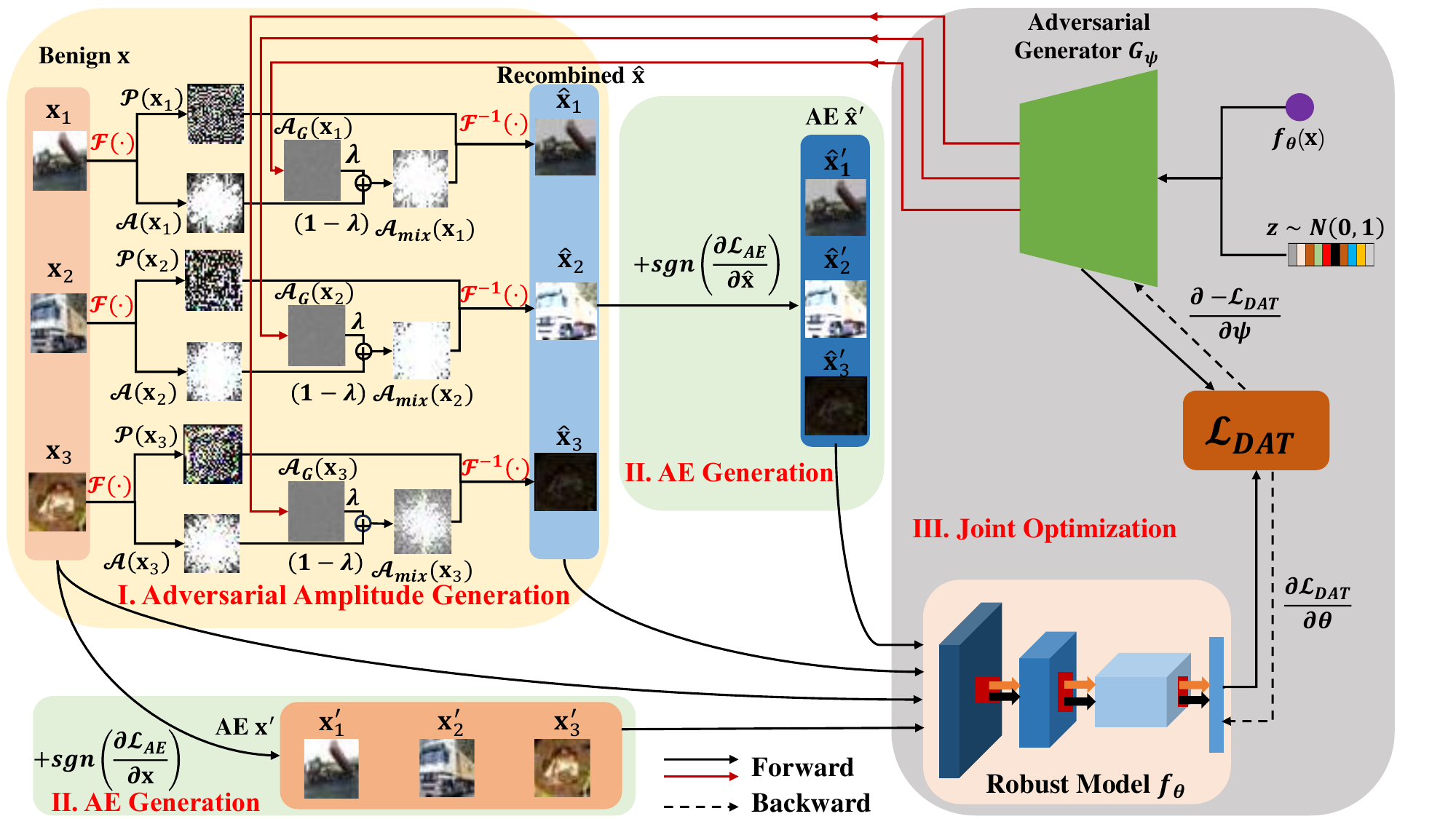}
    \caption{The overview of DAT, which consists of three stages: (\Rmnum{1}) adversarial amplitude generation, (\Rmnum{2}) AE generation, and (\Rmnum{3}) joint optimization.}
    \label{scfat}
\end{figure}

\section{Methodology}
In this section, the details of our proposed DAT are introduced and analyzed. As illustrated in Figure~\ref{scfat}, DAT consists of three stages. It first adopts the AAG $G_{\Bpsi}$ to generate adversarial amplitude and obtain the recombined data for benign ones in Stage~\Rmnum{1}. With the proposed loss $\mathcal{L}_{\mathsf{AE}}$ in Stage~\Rmnum{2}, we produce the AEs for both benign and recombined samples. Then, taking both benign and recombined samples, total loss $\mathcal{L}_{\mathsf{DAT}}$ for DAT are minimized to update robust model $f_{\Btheta}$, and maximized to optimize $G_{\Bpsi}$ adversarially in Stage~\Rmnum{3}. Stages~\Rmnum{1} and~\Rmnum{3} involve the adversarially trained AAG $G_{\Bpsi}$ and commonly updated model $f_{\Btheta}$, both of which share the $\mathcal{L}_{\mathsf{DAT}}$ and optimized jointly following the objective as:
\begin{equation}\label{datoo}
    \min\limits_{\Btheta}{\mathbb{E}_{(\mathbf{x},y)\sim \mathcal{D}}\left[\max\limits_{\Bpsi}{\mathbb{E}_{\mathbf{\hat{x}}\sim p(\mathbf{\hat{x}}|\mathbf{x},\Bpsi)}\left[\mathcal{L}_{\mathsf{DAT}}(f_{\Btheta} (\mathbf{x}),f_{\Btheta} (\mathbf{\hat{x}}),y)\right]}\right]},
\end{equation}
where $\hat{\mathbf{x}}$ is the recombined data of $\Bx$, following a sample-dependent conditional distribution $p(\mathbf{\hat{x}}|\mathbf{x},\Bpsi)$. Stage~\Rmnum{2} encompasses an efficient AE generation method, optimized using the proposed loss $\mathcal{L}_{\mathrm{AE}}$ as:
\begin{equation}
\label{obj:AE}
\max\limits_{\mathbf{x}^{\prime}\in\mathcal{S}_{\epsilon}[\mathbf{x}]}{\mathbb{E}_{(\mathbf{x},y)\sim \mathcal{D}}\left[\mathcal{L}_{\mathsf{AE}}(f_{\Btheta} (\mathbf{x}),f_{\Btheta} (\mathbf{x}^{\prime}),y)\right]}.
\end{equation}
Following the order of these three stages, we introduce (\Rmnum{1}) AAG in Sec.~\ref{sec:aag} and (\Rmnum{2}) the efficient AE generation in Sec.~\ref{eaeg}. Subsequently, building on these components, Sec.~\ref{wdat} details (\Rmnum{3}) the joint optimization. Ultimately, Sec.~\ref{tad} theoretically analyzes the mechanism of how the adversarial amplitude spectrum generated by AAG enforces the model to focus on the phase-level patterns.

\subsection{Adversarial Amplitude Generator}\label{sec:aag}
To explain the proposed AAG, we first introduce some constraints that the recombined $\mathbf{\hat{x}}$ with perturbed amplitude is expected to meet. With a small $\left.\epsilon_1>0\right.$ and an $\left.\epsilon_2\gg \epsilon_1\right.$, ideally, we expect $\hat{\mathbf{x}}$ of $(\Bx,y)\in\mathcal{D}$ based on the generated adversarial amplitude to satisfy the following three conditions:
\begin{itemize}[leftmargin=*]
    \item \textbf{C1.} $|h_p(\mathbf{x})-h_p(\mathbf{\hat{x}})|<\epsilon_1$: Ensuring $\hat{\mathbf{x}}$ retains the same semantics in the phase spectrum as $\mathbf{x}$.
     \item \textbf{C2.} $F_{\Btheta}(\mathbf{x})=F_{\Btheta}(\mathbf{\hat{x}})$: Ensuring $\mathbf{\hat{x}}$ remains distinguishable with the same label as $\mathbf{x}$ by $f_{\Btheta}$.
    \item \textbf{C3.} $|h_a(\mathbf{x})-h_a(\mathbf{\hat{x}})|>\epsilon_2$: Making $\hat{\mathbf{x}}$ maximize the $\mathcal{L}_{\mathsf{DAT}}$, causing the model's difficulty fitting the amplitude of images, and forcing the model to focus on phase patterns.   
\end{itemize}
Let us first analyze the above conditions \textbf{C1-C3} when a randomly selected distractor image is used. Since $\hat{\mathbf{x}}$ is recombined by the mixed amplitude with the distractor and original phase of $\mathbf{x}$, then, \textbf{C1} can be easily satisfied. As stated in Sec.~\ref{intro}, when the gap between the randomly selected distractor and training sample is too large, the $\mathcal{P}(\Bx)$'s information can be damaged, resulting in the inconsistent prediction between the $\Bx$ and $\hat{\Bx}$, destroying the \textbf{C1} and \textbf{C2}. Conversely, it is difficult to satisfy \textbf{C3}, limiting the model's attention to the patterns in $\mathcal{P}(\Bx)$. The above statement indicates that it is difficult for $\mathbf{\hat{x}}$, using the amplitude of the randomly selected distractor, to meet the above three constraints. 

Instead of searching for an appropriate distractor, we here resort to a generative approach, i.e., developing $G_{\Bpsi}$ to generate an adversarial amplitude $\mathcal{A}_{G}(\mathbf{x})$ as   
\begin{equation*}
     \mathcal{A}_{G}(\mathbf{x})=G_{\Bpsi}(\mathbf{z},f_{\Btheta}(\mathbf{x})), \quad \text{where}\  \mathbf{z}\overset{\mathrm{i.i.d.}}{\sim} \mathcal{N}(\mathbf{0},\mathbf{I}).
\end{equation*}
For efficient training, $G_{\Bpsi}$ is constructed by four linear layers (detailed architecture in Appendix~\ref{traset}). Moreover, the input with $f_{\Btheta}(\mathbf{x})$ can ease the difficulty of $G_{\Bpsi}$'s convergence. With $G_{\Bpsi}$,  since $\mathbf{\hat{x}}$ is still recombined by $\mathcal{A}_{G}(\mathbf{x})$ and the $\mathcal{P}(\mathbf{x})$, then, \textbf{C1} is satisfied. For the outer minimization in Eq.~\eqref{datoo}, we retain the label of $\mathbf{\hat{x}}$ same as $\mathbf{x}$, minimizing $\mathcal{L}_{\mathsf{DAT}}$ to update $f_{\Btheta}$, meeting the \textbf{C2}. 
To meets \textbf{C3}, $G_{\Bpsi}$ is optimized adversarially by maximizing the $\mathcal{L}_{\mathsf{DAT}}$ (further details in Sec.~\ref{wdat}), shown as the inner step in Eq.~\eqref{datoo}. Thereby, $\mathbf{\hat{x}}$ can limit $f_{\Btheta}$ to fit amplitude information. To achieve the convergence, $f_{\Btheta}$ has to focus on patterns in $\mathcal{P}(\Bx)$, learning more phase patterns unaffected by adversarial perturbations. 

Since $G_{\Bpsi}$ is adversarially trained by maximizing the $\mathcal{L}_{\mathsf{DAT}}$, $\mathcal{A}_{G}(\mathbf{x})$ is likely to compromise semantic integrity \cite{DBLP:conf/icml/Zhou0Y0L23}, hindering $f_{\Btheta}$ to retain the prediction consistency between $\mathbf{\hat{x}}$ and $\mathbf{x}$ \cite{DBLP:conf/iclr/Kim0H23}. Moreover, due to the loss of original amplitude information, using $\mathcal{A}_{G}(\mathbf{x})$ to replace $\mathcal{A}(\mathbf{x})$ entirely can also hurt the model's robustness \cite{DBLP:conf/cvpr/XuZ0W021}. As shown in Sec.~\ref{moti}, the mix-up operation on amplitude rarely has impact on the amplitude level robustness. That indicates the linear mix-up operation can preserve the energy of the amplitude spectrum and maintain the original amplitude information with less impact on the sample's original information. Otherwise, the original amplitude could be compromised, thereby hindering accurate model predictions. Following this line, a mix-up operation is employed, ensuring that a portion of the original amplitude information is preserved following: 
\begin{equation*}
    \mathcal{A}_{mix}(\mathbf{x})=\lambda \cdot\mathcal{A}_{G}(\mathbf{x}) +(1-\lambda)\cdot\mathcal{A}(\mathbf{x}), \quad \text{where}\ \lambda\sim \mathcal{U}(0,1).
\end{equation*}
The mix-up operation effectively satisfies \textbf{C1} and \textbf{C2}, ensuring $\hat{\Bx}$ remains distinguishable by $f_{\Btheta}$, and keeping $\mathcal{P}(\hat{\Bx})$'s patterns closer to those in $\mathcal{P}(\Bx)$ in manifold. Finally, $\mathbf{\hat{x}}$ is obtained by IDFT as
\begin{equation*}
    \mathbf{\hat{x}}=\mathcal{F}^{-1}(\mathcal{A}_{mix}(\mathbf{x}),\mathcal{P}(\mathbf{x})).
\end{equation*}
To elaborate on $G_{\Bpsi}$, we perform some experiments and provide visual results in Appendixes~\ref{asp} and~\ref{AAGlatent}. Now we can use $\mathbf{\hat{x}}$ as an augmentation of $\mathbf{x}$, introduced to Stage~\Rmnum{2} for generating AEs. 

\subsection{Efficient Adversarial Example Generation}\label{eaeg}
Since both $\Bx$ and ${\hat{\Bx}}$ are fed into Stage~\Rmnum{2} for generating AE, it doubles the time consumption if we use the same AE generation strategies as existing methods, \eg, PGD-AT \cite{DBLP:conf/iclr/MadryMSTV18} and TRADES \cite{DBLP:conf/icml/LiLWZG19}. To improve the training efficiency of DAT, we now suggest an efficient AE generation strategy. Since simply reducing the iteration step results in the difficulty of AEs' reaching the actual maximum in the $\ell_\infty$-ball \cite{DBLP:conf/nips/SriramananABR20}, we propose the loss $\mathcal{L}_{\mathsf{AE}}$ to increase adversarial perturbation length in each iteration as 
\begin{equation}\label{eq:LAE}
\begin{split}
    \mathcal{L}_{\mathsf{AE}}(f_\Btheta(\mathbf{x}),f_\Btheta(\mathbf{x}^{\prime}),y)= {}&\mathcal{L}_{\mathsf{CE}}(f_{\Btheta} (\mathbf{x}^{\prime}),y)+\beta \cdot
   \mathcal{D}_{\mathsf{KL}}(f_{\Btheta} (\mathbf{x}^{\prime}),f_{\Btheta} (\mathbf{x})),
\end{split} 
\end{equation}
where $\beta$ is a weighting parameter, and $\mathcal{L}_{\mathsf{CE}}$ and $\mathcal{D}_{\mathsf{KL}}$ are cross-entropy (CE) loss and Kullback-Leibler (KL) divergence. According to \cite{DBLP:conf/nips/SriramananABR20,DBLP:conf/nips/SriramananABR21}, it is effective to enlarge adversarial perturbation length for each step by maximizing the distance between $f_{\Btheta}(\mathbf{x})$ and $f_{\Btheta}(\mathbf{x}^{\prime})$. Following this line, based on PGD-AT maximizing the $\mathcal{L}_{\mathsf{CE}}$, the proposed $\mathcal{L}_{\mathsf{AE}}$ increases the adversarial perturbation step length by maximizing the KL divergence between benign sample and its AE. Then, with $\mathcal{L}_{\mathsf{AE}}$, AEs can use fewer iterative steps to achieve the actual maximum in the $\ell_\infty$-ball. As shown by experiments in Appendix~\ref{iteraAE}, DAT only needs \textit{5 steps} to generate AEs for both benign $\mathbf{x}$ and recombined $\mathbf{x}^{\prime}$, significantly reducing the training time while maintaining the model's robustness. Enlarging the inner step size $\alpha$ (in Eq.~\eqref{pgditer} of Appendix~\ref{premat}) can also increase the adversarial perturbation length in each iteration. Due to the fact that the current AT methods typically employ a fixed $\alpha$, we adopt an extra loss term for fair experimental comparisons. More details on the AE generation procedure, including pseudocodes and experimental analyses, are in Appendixes~\ref{pae},~\ref{ttcc} and~\ref{iteraAE}.

With the AAG and efficient AE generation, Stage~\Rmnum{3} attempts to improve the model's robustness.

\subsection{Joint Optimization} 
\label{wdat}
After the introduction to Stages~\Rmnum{1} and~\Rmnum{2}, we now delve into specifics of Stage~\Rmnum{3}, joint optimization, where $f_{\Btheta}$ and $G_{\Bpsi}$ are optimized jointly following the objective as Eq.~\eqref{datoo}. In Stage~\Rmnum{3}, to satisfy \textbf{C2} in Sec.~\ref{sec:aag}, keeping $\mathbf{\hat{x}}$ with the same label as $\mathbf{x}$ by $f_{\Btheta}$, recombined and benign samples and their AEs are fed into DAT for the model training, also reducing the negative impact of amplitude information loss. Moreover, $\mathcal{L}_{\mathsf{DAT}}$ needs to be minimized on benign and recombined samples' AEs to enhance the robustness of $f_{\Btheta}$, enforcing $f_{\Btheta}$ to focus on the phase patterns in the meanwhile. For commonly updating $\Btheta$ and adversarially renewing $\Bpsi$, we introduce the designed loss terms $\mathcal{L}_{\mathsf{DAT}}$ as:
\begin{equation}\label{eq:DAT}
\begin{split}
     \mathcal{L}_{\mathsf{DAT}}(f_{\Btheta}(\mathbf{x}),f_{\Btheta}(\mathbf{\hat{x}}),y)= \frac{1}{2}(\mathcal{L}_\mathsf{AT}(f_{\Btheta} (\mathbf{x}),y)+\mathcal{L}_\mathsf{AT}(f_{\Btheta} (\mathbf{\hat{x}}),y))+\omega \cdot \mathcal{D}_{\mathsf{JS}}(f_{\Btheta} (\mathbf{x}),f_{\Btheta} (\mathbf{\hat{x}})),
\end{split}
\end{equation}
where $\mathcal{L}_\mathsf{AT}$ and $\mathcal{D}_{\mathsf{JS}}$ are adversarial training and consistency regularization losses respectively, and $\omega$ is the weighting parameter for $\mathcal{D}_{\mathsf{JS}}$. We discuss $\mathcal{L}_\mathsf{AT}$ and 
 $\mathcal{D}_{\mathsf{JS}}$ below separately.
 
\textbf{Adversarial Training Loss $\mathcal{L}_\mathsf{AT}$.}\hspace{.5em}$\mathcal{L}_{\mathsf{AT}}$ is the loss used to guide $f_{\Btheta}$ to learn robust features on AEs against adversarial attacks. As shown in Figure~\ref{advimpact}, although adversarial perturbations significantly damage phase patterns, there are still some unaffected features in the phase of AEs, important for $f_{\Btheta}$ to categorize AEs correctly. Since these unaffected phase patterns contain adversarial perturbations, it is difficult for $f_{\Btheta}$ to learn these features only with AEs. Therefore, we utilize benign samples and their AEs in AT, guiding the model to learn their shared features, significant for improving $f_{\Btheta}$'s robustness. Following this line, $\mathcal{L}_{\mathsf{AT}}$ for $(\mathbf{x},y)\in\mathcal{D}$ with AE $\mathbf{x}^{\prime}$ can be expressed as:
\begin{equation*}
\begin{split}
        \mathcal{L}_{\mathsf{AT}}(f_{\Btheta} (\mathbf{x}),y)={}&\mathcal{L}_{\mathsf{CE}}(f_{\Btheta} (\mathbf{x}),y)+\beta\cdot \mathcal{D}_{\mathsf{KL}}(f_{\Btheta} (\mathbf{x}^{\prime}),f_{\Btheta} (\mathbf{x})),
\end{split}
\end{equation*}
where $\beta$ is the weighting parameter identical to that in Eq.~\eqref{eq:LAE}. $\hat{\mathbf{x}}$ adopts the same AT loss as that of $\mathbf{x}$. 

\textbf{Consistency Regularization Loss $\mathcal{D}_{\mathsf{JS}}$.}\hspace{.5em}$\mathcal{D}_{\mathsf{JS}}$ is the loss used to preserve the prediction consistency between $\Bx$ and $\hat{\Bx}$. In DAT, the amplitude of $\mathbf{\hat{x}}$'s frequency spectrum is mixed with the adversarial one generated by $G_{\Bpsi}$, maximizing $\mathcal{L}_{\mathsf{DAT}}$, showing the large gap between $h_{a}(\mathbf{x})$ and $h_{a}(\mathbf{\hat{x}})$. Since $\mathbf{\hat{x}}$ has the same phase patterns as $\mathbf{x}$, keeping the prediction consistency between $\mathbf{x}$ and $\mathbf{\hat{x}}$ can make the model learn more phase patterns further, enhancing the model's robustness. Inspired by the mechanism, we use the Jensen–Shannon (JS) divergence $\mathcal{D}_{\mathsf{JS}}$ to ensure the prediction consistency as:
\begin{equation}
    \mathcal{D}_{\mathsf{JS}}(f_{\Btheta} (\mathbf{x}),f_{\Btheta} (\mathbf{\hat{x}}))=\frac{1}{2}\Big(\mathcal{D}_{\mathsf{KL}}\big(\frac{f_{\Btheta} (\mathbf{x})+f_{\Btheta} (\mathbf{\hat{x}})}{2},f_{\Btheta} (\mathbf{x})\big)+\mathcal{D}_{\mathsf{KL}}\big(\frac{f_{\Btheta} (\mathbf{x})+f_{\Btheta} (\mathbf{\hat{x}})}{2},f_{\Btheta} (\mathbf{\hat{x}})\big)\Big).
\end{equation}
This concludes the introduction to $\mathcal{L}_{\mathsf{DAT}}$. Notably, during the model training of DAT, the gap of batch normalization (BN) parameters between $\mathbf{x}$ and $\mathbf{\hat{x}}$ is quite large (details in Appendix~\ref{batchfig}), posing challenges to model's convergence. To remedy this, we adopt different BNs for $\mathbf{x}$ and $\mathbf{\hat{x}}$ during training. Please refer to the pseudocode of DAT in Appendix~\ref{pdat} for a comprehensive presentation.

\subsection{Theoretical Analysis}\label{tad}
Through a convergence analysis of the empirical risk, this section theoretically discusses the effects of DAT on the model to focus on phase patterns. Detailed proofs are provided in Appendix~\ref{sec:proofs}. Concretely, we instantiate $g$ as a linear softmax classifier $\mathbf{W}=[\mathbf{w}_1,...,\mathbf{w}_c] \in \mathbb{R}^{m\times c}$ on top of the learned features $\Bh$. Generally, for $(\Bx,y)\in\mathcal{D}$, suppose $\mathcal{T}(\Bx)$ represents the augmented distribution over data points, where $\Bx$ can be transformed as anyone in $\{\Bx,\Bx^{\prime},\hat{\Bx},\hat{\Bx}^{\prime}\}$. Then, the augmented data for $\Bx$ can be denoted as $\left.t(\Bx)\sim\mathcal{T}(\Bx)\right.$. Since the augmentation will increase the discrepancy between original and augmented distributions w.h.p., we can establish a common assumption.

\begin{ass}\label{ass:ass1}
Assume $\left. \mathbb{E}_{\mathcal{T}}[\|\Bh(t(\Bx))-\Bh(\Bx)\|]>\varepsilon_0 \right.$, where $\varepsilon_0>0$ is a relatively large value. Since only the amplitude spectrum is perturbed in the proposed DAT, it is reasonable that
\begin{equation*}
\mathbb{E}_{\mathcal{T}}[|h_a(t(\Bx))-h_a(\Bx)|]>\mathbb{E}_{\mathcal{T}}[|h_p(t(\Bx))-h_p(\Bx)|].
\end{equation*}
\end{ass}


\begin{restatable}[Weight Regularization of Amplitude Features]{theo}{thm}\label{prop1} 
Grant Assumption~\ref{ass:ass1}, when the empirical risk $\hat{R}$ is minimized with some convex loss function $\mathcal{L}$ (e.g. CE loss):
\begin{equation*}
\hat{R}(\mathbf{W})\coloneqq\frac{1}{|\mathcal{D}|}\sum\limits_{(\Bx,y)\in \mathcal{D}} \mathbb{E}_{t(\Bx)\sim\mathcal{T}(\Bx)} \left[\mathcal{L}\left(\mathbf{W}^{\top}\Bh(t(\Bx)),y\right)\right],
\end{equation*}
we have $w_{j,a}\to 0$ for all $j\in[c]$, where $w_{j,a}$ is the corresponding weights of amplitude features $h_a$.
\end{restatable}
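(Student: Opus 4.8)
The plan is to exploit the convexity of $\CL$ to rewrite the augmented empirical risk $\hat R(\mathbf{W})$ as the ``clean-mean'' risk plus a nonnegative consistency gap that penalizes any variation of the logits $\mathbf{W}^{\top}\Bh(t(\Bx))$ across the augmentation $t(\Bx)\sim\CT(\Bx)$, and then to argue that, under Assumption~\ref{ass:ass1}, this gap can be brought to its floor only by driving the amplitude weights $w_{j,a}$ to zero. The intuition is that the minimizer wants the logits to be invariant along $\CT(\Bx)$, and the only varying direction it can afford to ignore is the highly unstable amplitude coordinate.

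First I would split the feature coordinates into the amplitude coordinate $h_a$, the phase coordinate $h_p$, and the remaining stable coordinates, so that for each class the logit reads $\mathbf{w}_j^{\top}\Bh(t(\Bx))=w_{j,a}\,h_a(t(\Bx))+w_{j,p}\,h_p(t(\Bx))+(\text{rest})$. Fixing $(\Bx,y)$ and applying Jensen's inequality to the convex map $\CL(\cdot,y)$ gives
\[
\mathbb{E}_{t(\Bx)\sim\CT(\Bx)}\!\left[\CL\!\left(\mathbf{W}^{\top}\Bh(t(\Bx)),y\right)\right]\;\ge\;\CL\!\left(\mathbf{W}^{\top}\bar{\Bh}(\Bx),y\right),\qquad \bar{\Bh}(\Bx):=\mathbb{E}_{t(\Bx)}[\Bh(t(\Bx))],
\]
so $\hat R(\mathbf{W})$ is bounded below by the risk at the augmentation-averaged features. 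For a strictly convex loss (CE on the bounded attainable logit range), equality holds \emph{iff} the logit $\mathbf{W}^{\top}\Bh(t(\Bx))$ is almost surely constant along $\CT(\Bx)$, i.e. $\mathbf{w}_j^{\top}\!\left(\Bh(t(\Bx))-\bar{\Bh}(\Bx)\right)=0$ a.s. for every $j$. Quantifying the gap by a second-order expansion of $\CL$ about the mean logit $\bar{\mathbf z}(\Bx)=\mathbf{W}^{\top}\bar{\Bh}(\Bx)$ produces a penalty $\tfrac12\,\mathbb{E}[\,\delta^{\top}\nabla^2\CL(\bar{\mathbf z},y)\,\delta\,]$ with $\delta_j=\mathbf{w}_j^{\top}(\Bh(t(\Bx))-\bar{\Bh}(\Bx))$; since $\nabla^2\CL$ is PSD for convex $\CL$, this term is nonnegative and is minimized only when each $\delta_j$ is small.

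Next I would feed Assumption~\ref{ass:ass1} into $\delta_j$. The fluctuation of the $j$-th logit is dominated by $w_{j,a}\,(h_a(t(\Bx))-\bar h_a(\Bx))$, whose deviation along $\CT$ is the large quantity: by the assumption the amplitude deviation strictly exceeds the phase deviation and, with $\varepsilon_0$ large as enforced by the AAG, is substantial, whereas $h_p$ is essentially invariant under $\CT$. Because the label is carried by the invariant phase coordinate, the mean-loss term $\CL(\mathbf{W}^{\top}\bar{\Bh}(\Bx),y)$ can already be made small using $w_{j,p}$ and the stable coordinates alone, so any nonzero $w_{j,a}$ contributes almost exclusively to the consistency penalty. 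Reading off the equality condition, invariance of the logit along the strongly varying amplitude direction forces $w_{j,a}=0$; away from exact equality, setting $\partial\hat R/\partial w_{j,a}=0$ at the minimizer yields $|w_{j,a}|\lesssim 1/\mathbb{E}_{\CT}\!\big[(h_a-\bar h_a)^2\big]$, which $\to 0$ as the amplitude perturbation (hence $\varepsilon_0$) grows, giving $w_{j,a}\to 0$ for every $j\in[c]$.

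The main obstacle is upgrading the step ``amplitude variability forces $w_{j,a}\to 0$'' from heuristic to rigorous. Three issues need care: the variance penalty is exact only through the Jensen gap, and for CE the second-order remainder is prediction-dependent so its Hessian is not uniformly bounded away from $0$; the assumption controls a first absolute moment rather than a variance, so I must convert a large mean absolute deviation into a nondegenerate second moment (or work directly with the a.s. equality condition, which only needs the deviation to be nonzero); and $h_a$ may be correlated with the other coordinates, so in principle a large amplitude fluctuation could be cancelled by an opposing weighted fluctuation elsewhere. I would control the last point by decorrelating $h_a$ from the stable coordinates — projecting out their span before reading off the amplitude coefficient — so that the component of the logit variation that is \emph{unique} to the dominant amplitude direction cannot be hedged, and invoking strict convexity of CE on the bounded logit range to keep the penalty genuinely active at the optimum.
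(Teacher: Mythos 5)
Your proposal follows essentially the same route as the paper's own proof: a second-order expansion of the loss around the augmentation-averaged features, positive semi-definiteness of the softmax CE Hessian, and the identification of the resulting penalty as a weighted feature-variance sum, so that the large amplitude variability granted by Assumption~\ref{ass:ass1} forces $w_{j,a}\to 0$ for all $j\in[c]$. The obstacles you flag at the end---converting the first-absolute-moment assumption into a variance statement, and ruling out cancellation between the amplitude coordinate and correlated coordinates---are precisely the points the paper resolves by fiat (assuming zero covariance between $\|\mathbf{H}\|_F$ and $\|\Delta\|_2$ in its H\"older bound, and a diagonal feature covariance $\mathbf{\Sigma}$), so your sketch matches the paper's argument in both structure and level of rigor.
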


\begin{corollary}\label{cor:cor2}
Suppose the predicted probability $f_{\mathbf{w}}(\Bx)=[p_1,\dots,p_c]^{\top}$, where
\begin{equation*}
    p_i=\frac{\exp(\mathbf{w}_i^{\top}\Bh)}{\sum_{j=1}^{c}\exp(\mathbf{w}_{j}^{\top}\Bh)}=\frac{1}{\sum_{j=1}^c\exp((\mathbf{w}_j-\mathbf{w}_i)^{\top}\Bh)}.
\end{equation*}
For every $i,j\in[c]$, we have $(w_{i,a}-w_{j,a})h_a\to0$ . 
\end{corollary}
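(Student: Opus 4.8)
The plan is to obtain the corollary as an immediate consequence of Theorem~\ref{prop1}, using the fact that the softmax prediction is invariant to a common shift of all logits and therefore depends on the weights only through their pairwise differences. First I would record that, in the stated form
\[
p_i=\frac{1}{\sum_{j=1}^c\exp\bigl((\mathbf{w}_j-\mathbf{w}_i)^{\top}\Bh\bigr)},
\]
the contribution of each feature coordinate to the prediction is weighted by the difference of the corresponding entries of $\mathbf{w}_j$ and $\mathbf{w}_i$, not by those entries separately. Splitting the exponent along the amplitude feature $a$ isolates the amplitude term $(w_{j,a}-w_{i,a})h_a$, which is exactly the quantity appearing in the claim; the remaining coordinates (notably the phase feature $h_p$) contribute the complementary term.

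Next I would invoke Theorem~\ref{prop1}: under Assumption~\ref{ass:ass1}, minimizing the empirical risk $\hat{R}(\mathbf{W})$ with a convex loss drives $w_{j,a}\to 0$ for every $j\in[c]$. Applying this to the indices $i$ and $j$ and using the triangle inequality then gives $w_{i,a}-w_{j,a}\to 0$ for every pair $i,j\in[c]$.

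Finally, to transfer this to the product $(w_{i,a}-w_{j,a})h_a$ in the corollary, I would use that the amplitude feature $h_a$ is bounded on the (compact) input domain $\mathcal{X}$, since $h$ is a fixed feature extractor and images lie in a bounded range. Hence $|(w_{i,a}-w_{j,a})h_a|\le \sup_{\Bx}|h_a(\Bx)|\cdot|w_{i,a}-w_{j,a}|\to 0$, so $(w_{i,a}-w_{j,a})h_a\to 0$. Read back through the displayed softmax, this shows the amplitude feature asymptotically stops influencing the predicted distribution, so the classifier's decision comes to rest on the remaining phase coordinates — the intended interpretation connecting the weight regularization to a phase-focused model.

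The only nontrivial point, and hence the main obstacle, is the boundedness of $h_a$ needed to pass the convergence of the weight difference through the product: if $h_a$ were permitted to grow without bound, one would instead require a matching decay rate for $w_{i,a}-w_{j,a}$ rather than mere convergence to $0$. In the present setting of bounded inputs and a fixed feature extractor this boundedness holds, so no quantitative rate is needed and the corollary follows directly from Theorem~\ref{prop1}.
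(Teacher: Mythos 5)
Your proposal is correct and follows essentially the same route as the paper, which treats the corollary as an immediate consequence of Theorem~\ref{prop1}: since $w_{j,a}\to 0$ for every $j\in[c]$ and the softmax depends on the weights only through pairwise differences, the amplitude contribution $(w_{i,a}-w_{j,a})h_a$ vanishes. Your extra care about the boundedness of $h_a$ is a harmless refinement rather than a necessary step, since for each fixed sample $h_a$ is just a finite feature value, so convergence of the weight differences already forces the product to zero.
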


\begin{remark}
Theorem~\ref{prop1} suggests that for weights $w_{j,a}$ corresponding to features $h_a$ derived from amplitude pattern, minimizing the empirical risk $\hat{R}$ regularizes it to 0. As a result, it is difficult for the model to fit the adversarial amplitude generated by AAG. In order to converge, the model needs to reduce the reliance on $h_a$ by restricting $w_{j, a}$. Hence, the model would mitigate the impact of $h_a$ on the predicted labels, as shown in Corollary~\ref{cor:cor2}, and pay more attention to features $h_p$ derived from phase patterns, capturing more phase patterns unaffected by adversarial attacks. Therefore, we can verify the effectiveness of DAT in enhancing the model's robustness.
\end{remark}

\section{Experiments}
\label{sec:exe}
In this section, we perform experiments to verify the effectiveness of DAT and explore the function of some DAT’s parts. Sec.~\ref{sec:experimental setup} shows experimental setups. Sec.~\ref{sec:comparison of common methods} compares the model's robustness with existing AT methods with fixed $\epsilon$ and $\alpha$. Sec.~\ref{sec:comparison of complex strategy-based methods} compares DAT with existing AT methods over complex strategy, \eg, AWP and SWA. Sec.~\ref{ablation study} presents the analysis of ablation studies. 

\subsection{Experimental Setup}\label{sec:experimental setup} 
\textbf{Experimental and Evaluation Settings.}\hspace{.5em}We select three datasets: CIFAR-10, CIFAR-100, and Tiny ImageNet \cite{DBLP:conf/cvpr/DengDSLL009}. For all experiments in this work, ResNet-18, WideResNet-28-10 (WRN-28-10), and WideResNet-34-10 (WRN-34-10) are used as model architectures (experiments are attached to Appendix ~\ref{SD}), with $\beta=15$ and $\omega=2$ (exploration in Appendix~\ref{be-om}). During training, the inner step size is fixed as $\alpha=\nicefrac{2}{255}$ to generate adversarial perturbation $\ell_\infty$-bounded with constant radius $\epsilon=\nicefrac{8}{255}$ following  \cite{DBLP:conf/cvpr/JiaZW0WC22,DBLP:conf/iclr/LiGZ023a,DBLP:journals/tifs/KuangLWJ23} (details in Appendixes~\ref{data} and~\ref{traset}). 

\textbf{Baselines.}\hspace{.5em}We compare results with PGD-AT \cite{DBLP:conf/iclr/MadryMSTV18}, TRADES \cite{DBLP:conf/icml/ZhangYJXGJ19}, MART \cite{DBLP:conf/iclr/0001ZY0MG20}, and recent AT methods with competitive performance: LAS-AT \cite{DBLP:conf/cvpr/JiaZW0WC22}, SCARL \cite{DBLP:journals/tifs/KuangLWJ23}, and ST \cite{DBLP:conf/iclr/LiGZ023a}. We also compare the DAT with OA-AT \cite{DBLP:conf/eccv/AddepalliJSB22}, DAJAT \cite{DBLP:conf/nips/AddepalliJR22}, and IDBH \cite{Li2023DataAA}, which uses extra strategies, \eg, AWP \cite{DBLP:conf/nips/WuX020} and SWA \cite{DBLP:conf/uai/IzmailovPGVW18}. Introduction and details of baselines are attached in Appendixes~\ref{exATm} and~\ref{exbl}.

\begin{table*}[tb]
\caption{Average natural and robust accuracy (\%) of ResNet-18 against $\ell_\infty$ threat with $\epsilon=\nicefrac{8}{255}$ in 7 runs. The best results are \textbf{boldfaced}.}
\centering
\newcommand{\ccg}{\cellcolor{gray!20}}
\newcommand{\ccgbf}{\cellcolor{gray!20}\bf}
\resizebox{1.0\linewidth}{!}{
\begin{tabular}{llcccccc}
\toprule
\sc \textbf{Dataset} & \sc \textbf{Method} & Natural & FGSM & PGD-20 & PGD-100 & C\&W$_\infty$ & AA \\ 
\midrule
\multirow{9}{*}{CIFAR-10}      &
PGD-AT \cite{DBLP:conf/iclr/MadryMSTV18}  &   82.78$\pm$0.12        &  56.94$\pm$0.17          &    51.30$\pm$0.16         &      50.88$\pm$0.26     &    49.72$\pm$0.24 &  47.63$\pm$0.08                  \\  
& TRADES \cite{DBLP:conf/icml/ZhangYJXGJ19}  &      82.41$\pm$0.12      &  58.47$\pm$0.19          &  52.76$\pm$0.08           & 52.47$\pm$0.13          & 50.43$\pm$0.17 &    49.37$\pm$0.08                 \\                         
      & MART \cite{DBLP:conf/iclr/0001ZY0MG20}         & 80.70$\pm$0.17           & 58.91$\pm$0.24            &      54.02$\pm$0.29       & 53.38$\pm$0.30 & 49.35$\pm$0.27&47.49$\pm$0.23                    \\                    
 & ST \cite{DBLP:conf/iclr/LiGZ023a}           & 83.10$\pm$0.10            &  59.42$\pm$0.32            & 54.53$\pm$0.14         & 54.31$\pm$0.17 & 51.35$\pm$0.21 &50.51$\pm$0.17                       \\ 
& SCARL \cite{DBLP:journals/tifs/KuangLWJ23}          & 80.67$\pm$0.31            &  58.32$\pm$0.13            & 54.24$\pm$0.17         & 54.10$\pm$0.13 & 51.93$\pm$0.15 &50.45$\pm$0.11 \\
 &\ccg DAT (Ours)           & \ccgbf 84.17$\pm$0.21           &  \ccgbf 62.06$\pm$0.19          & \ccgbf 57.55$\pm$0.15           & \ccgbf 57.47$\pm$0.17   & \ccgbf 52.59$\pm$0.13 &     \ccgbf 51.36$\pm$0.14              \\ 
 \cmidrule(lr){2-8}
 & TRADES+AWP           &   81.16$\pm$0.12          & 57.86$\pm$0.14 & 54.56$\pm$0.06           & 54.45$\pm$0.14    & 50.95$\pm$0.12 &  50.31$\pm$0.10                \\ 
  & SCARL+AWP           &  81.46$\pm$0.15       & 59.26$\pm$0.16           & 55.38$\pm$0.14          & 55.27$\pm$0.13  & 52.15$\pm$0.15 &  51.08$\pm$0.11                 \\ 
 
  &\cellcolor{gray!20}DAT+AWP (Ours) &\cellcolor{gray!20}\textbf{82.63$\pm$0.15}&\cellcolor{gray!20}\textbf{62.78$\pm$0.21}           &\cellcolor{gray!20}\textbf{58.87$\pm$0.12}          &\cellcolor{gray!20}\textbf{58.78$\pm$0.15}  &\cellcolor{gray!20}\textbf{52.88$\pm$0.21}      & \cellcolor{gray!20}\textbf{52.54$\pm$0.12}           \\ 
\midrule
\multirow{9}{*}{CIFAR-100}      & 
PGD-AT \cite{DBLP:conf/iclr/MadryMSTV18}  &   57.27$\pm$0.21        &  31.81$\pm$0.11          &    28.66$\pm$0.11        &      28.49$\pm$0.16     &    26.89$\pm$0.08 & 24.60$\pm$0.04                 \\  
& TRADES \cite{DBLP:conf/icml/ZhangYJXGJ19}  &      57.94$\pm$0.15     &  32.37$\pm$0.18          &  29.25$\pm$0.18           & 29.10$\pm$0.20          & 25.88$\pm$0.16  & 24.71$\pm$0.04                   \\                         
      & MART \cite{DBLP:conf/iclr/0001ZY0MG20}          & 55.03$\pm$0.10           & 33.12$\pm$0.26            &     30.32$\pm$0.18     & 30.20$\pm$0.17 & 26.60$\pm$0.11 & 25.13$\pm$0.15                       \\                    
 & ST \cite{DBLP:conf/iclr/LiGZ023a}          & 58.44$\pm$0.12            &  33.35$\pm$0.23           & 30.53$\pm$0.13           & 30.39$\pm$0.17 & 26.70$\pm$0.20 & 25.61$\pm$0.07                       \\  
 & SCARL \cite{DBLP:journals/tifs/KuangLWJ23}          & 57.63$\pm$0.11            &  33.14$\pm$0.19          & 30.83$\pm$0.24           & 30.77$\pm$0.21 & 26.86$\pm$0.16 & 25.82$\pm$0.19                       \\  
 &\cellcolor{gray!20}DAT (Ours)           &\cellcolor{gray!20}\textbf{62.57$\pm$0.17}            &\cellcolor{gray!20}\textbf{36.63$\pm$0.12}           & \cellcolor{gray!20}\textbf{33.37$\pm$0.15}           & \cellcolor{gray!20}\textbf{33.15$\pm$0.12}   & \cellcolor{gray!20}\textbf{28.34$\pm$0.14} &\cellcolor{gray!20}\textbf{27.11$\pm$0.15}                    \\ 
 \cmidrule(lr){2-8}
 & TRADES+AWP           &   58.76$\pm$0.07          &  33.82$\pm$0.15  & 31.53$\pm$0.14            & 31.42$\pm$0.12     & 27.03$\pm$0.16 &  26.06$\pm$0.12                \\ 
  & SCARL+AWP           &  58.36$\pm$0.12          & 34.25$\pm$0.14           & 32.32$\pm$0.14           & 32.26$\pm$0.13  & 27.92$\pm$0.11 & 26.83$\pm$0.15                 \\ 
 
  &\cellcolor{gray!20}DAT+AWP (Ours) &\cellcolor{gray!20}\textbf{63.28$\pm$0.11}          &\cellcolor{gray!20}\textbf{38.22$\pm$0.14}           &\cellcolor{gray!20}\textbf{35.29$\pm$0.13}          &  \cellcolor{gray!20}\textbf{35.18$\pm$0.12}  &\cellcolor{gray!20}\textbf{29.43$\pm$0.17}          & \cellcolor{gray!20}\textbf{28.09$\pm$0.12}                           \\ 
\midrule
\multirow{8}{*}{Tiny ImageNet}      & 
PGD-AT \cite{DBLP:conf/iclr/MadryMSTV18}  &   46.36$\pm$0.22        &  23.49$\pm$0.39          &    20.41$\pm$0.29          &      20.35$\pm$0.37      &    17.86$\pm$0.28  &    14.46$\pm$0.31                  \\  
& TRADES \cite{DBLP:conf/icml/ZhangYJXGJ19}  &      43.65$\pm$0.35     &  21.37$\pm$0.48 &  18.62$\pm$0.48          &  18.56$\pm$0.33          & 15.38$\pm$0.35          & 13.32$\pm$0.41          \\         
& LAS-AT \cite{DBLP:conf/cvpr/JiaZW0WC22}          & 45.27$\pm$0.35            &  24.64$\pm$0.24           & 21.82$\pm$0.27            & 21.72$\pm$0.23 & 18.07$\pm$0.25 & 16.25$\pm$0.22                      \\
& SCARL \cite{DBLP:journals/tifs/KuangLWJ23}          & 49.75$\pm$0.17            &  25.52$\pm$0.16           & 22.64$\pm$0.11            & 22.58$\pm$0.18 & 18.77$\pm$0.27 & 16.31$\pm$0.14                      \\
 &\cellcolor{gray!20}DAT (Ours)           &\cellcolor{gray!20}\textbf{52.45$\pm$0.21}            & \cellcolor{gray!20}\textbf{28.45$\pm$0.15}           & \cellcolor{gray!20}\textbf{25.47$\pm$0.12} &  \cellcolor{gray!20}\textbf{25.36$\pm$0.14}   & \cellcolor{gray!20}\textbf{20.39$\pm$0.17}  &\cellcolor{gray!20}\textbf{17.51$\pm$0.19}                 \\
 \cmidrule(lr){2-8}
 & TRADES+AWP           &  46.64$\pm$0.35          &  26.58$\pm$0.19  & 22.31$\pm$0.20           & 22.28$\pm$0.12     & 17.84$\pm$0.11 & 15.34$\pm$0.12                  \\ 
 & LAS-AT+AWP           &  46.85$\pm$0.13          & 25.76$\pm$0.12           & 23.30$\pm$0.11           & 23.05$\pm$0.15  & 19.68$\pm$0.11 & 17.98$\pm$0.15                 \\
  & \cellcolor{gray!20}DAT+AWP (Ours) & \cellcolor{gray!20}\textbf{53.29$\pm$0.25}          & \cellcolor{gray!20}\textbf{30.91$\pm$0.11}           &\cellcolor{gray!20}\textbf{27.25$\pm$0.13}          & \cellcolor{gray!20}\textbf{27.18$\pm$0.16}  &\cellcolor{gray!20}\textbf{22.12$\pm$0.12}          & \cellcolor{gray!20}\textbf{ 19.29$\pm$0.13 }                                 
  \\ 
\bottomrule
\end{tabular}
}
\label{main}
\end{table*}

\begin{table*}[tb]
\centering
\caption{Average natural and robust accuracy (\%) of WRN-34-10 against $\ell_\infty$ threat with $\epsilon=\nicefrac{8}{255}$ in 7 runs. The best results are \textbf{boldfaced}.}
\label{wide34}
\resizebox{1.0\linewidth}{!}{
\begin{tabular}{l|cccccccc}
\toprule
\multirow{2}*{\diagbox{\sc \textbf{Method}}{\sc \textbf{Dataset}}} & \multicolumn{4}{c}{CIFAR-10}& \multicolumn{4}{c}{CIFAR-100} \\
\cmidrule(lr){2-5} \cmidrule(lr){6-9}
&Natural&PGD-100&C\&W$_\infty$ &AA&Natural&PGD-100&C\&W$_\infty$ &AA\\ 
\midrule
 PGD-AT \cite{DBLP:conf/iclr/MadryMSTV18}  &   85.37$\pm$0.74        &  54.61$\pm$0.68          &   53.42$\pm$0.82         &  52.03$\pm$0.68      &    60.63$\pm$1.17  &    30.83$\pm$0.51     &    30.21$\pm$0.83    &    27.93$\pm$0.57           \\  
TRADES \cite{DBLP:conf/icml/ZhangYJXGJ19}  &       85.54$\pm$0.59      &  56.04$\pm$0.45          &  53.91$\pm$0.46           & 53.37$\pm$0.51          & 61.26$\pm$0.39  & 33.11$\pm$0.42     & 30.24$\pm$0.58     & 28.32$\pm$0.62           \\ 
MART \cite{DBLP:conf/iclr/0001ZY0MG20}          & 85.13$\pm$0.52           & 58.72$\pm$0.66  &     53.02$\pm$0.37       & 51.61$\pm$0.48 & 60.52$\pm$0.62   & 32.34$\pm$0.62   & 29.07$\pm$0.43  & 25.91$\pm$0.36               \\  LAS-AT \cite{DBLP:conf/cvpr/JiaZW0WC22}          & 86.07$\pm$0.31            &  55.97$\pm$0.47            & 55.49$\pm$0.54            & 53.34$\pm$0.42 & 61.87$\pm$0.57  & 32.21$\pm$0.45   & 30.47$\pm$0.34   & 28.91$\pm$0.39                \\  
SCARL \cite{DBLP:journals/tifs/KuangLWJ23}          & 84.41$\pm$0.23            &   57.81$\pm$0.65           &  56.21$\pm$0.47           &  54.37$\pm$0.29   & 62.41$\pm$0.36 & 34.19$\pm$0.46 & 30.53$\pm$0.31  & 29.52$\pm$0.33                \\ 
\rowcolor{gray!20}
DAT (Ours) & \textbf{86.78$\pm$0.42}  &  \textbf{61.32$\pm$0.24} &  \textbf{57.62$\pm$0.34} &  \textbf{56.46$\pm$0.33} &  \textbf{64.53$\pm$0.25} &  \textbf{36.75$\pm$0.43}   &\textbf{32.21$\pm$0.27} & \textbf{30.79$\pm$0.17} \\
\bottomrule
\end{tabular}
}
\end{table*}

\subsection{Comparison with Common Methods}\label{sec:comparison of common methods}
In this subsection, we compare the robustness of DAT with common methods that use fixed $\epsilon$ and $\alpha$ during the training procedure. Table~\ref{main} displays the results on CIFAR-10, CIFAR-100, and Tiny ImageNet using ResNet-18. Compared with existing methods, DAT not only improves the model's robustness but also enhances the natural accuracy. On CIFAR-10, DAT achieves an average improvement of $\sim$2.9\% against FGSM, PGD-20, and PGD-100. For challenging C\&W$_\infty$ and AA, DAT obtains $\sim$0.66\% and $\sim$0.85\% improvement, respectively. Specifically, since SCARL adopts a contrastive learning strategy, DAT achieves less improvement against C\&W$_\infty$ compared to it. CIFAR-100 contains more classes but fewer samples for each class, thereby making it challenging for AT. DAT enhances the model's robustness by $\sim$2.7\% on average against FGSM, PGD-20, and PGD-100 compared to the existing methods. For the demanding C\&W$_\infty$ and AA, the model's robustness is improved by $\sim$1.5\% and $\sim$1.3\%. On the intricate real-world dataset Tiny ImageNet, DAT achieves $\sim$2.9\% better performance against FGSM, PGD-20, and PGD-100. For the challenging C\&W$_\infty$ and AA, we enhance the model's robustness by $\sim$1.6\% and $\sim$1.2\%. AWP is proved to be an effective method for protecting the model from robust overfitting. To further improve the model's robustness, we combine DAT with AWP, and compare the performance with the combination of various existing methods and AWP fairly. Since a few methods provide the codes for the combination with AWP, we choose only these methods for comparison. As shown in Table~\ref{main}, the combination of DAT and AWP improves the model's robustness further and still outperforms previous methods.

Large-capacity models usually have better adversarial robustness. Thus, we also perform comparative experiments using WRN-34-10 on CIFAR-10 and CIFAR-100, as shown in Table~\ref{wide34}. Compared to existing methods, DAT still retains better performance and has a lower negative impact on the model's natural accuracy, achieving $\sim$1.51\%, $\sim$0.63\%, and $\sim$1.31\% robust accuracy improvement on CIFAR-10 with WRN-34-10. On CIFAR-100, the robustness obtained by DAT is enhanced by $\sim$2.56\%, $\sim$1.68\%, and $\sim$1.27\%, showing significant improvement compared to existing methods.

\subsection{Comparison with Complex Strategy Based Methods}\label{sec:comparison of complex strategy-based methods}
After the comparison of common methods, we then demonstrate the performance of some existing methods with extra strategies, \eg, AWP and SWA. Addepalli \etal \cite{DBLP:conf/nips/AddepalliJR22} find that variable $\epsilon$ and $\alpha$ can enhance the model's robustness, and some methods using variable $\epsilon$ and $\alpha$ (\eg, DAJAT, OA-AT, and IDBH) obtain competitive performance against various adversarial attacks. To further verify the effectiveness of DAT, we compare it with the latest competitive methods, \eg, DAJAT with one augmentation, OA-AT and IDBH, see Appendix~\ref{exATm}. As shown in Table~\ref{hicks}, fixing $\epsilon=\nicefrac{8}{255}$ and $\alpha=\nicefrac{2}{255}$, we combine DAT with AWP and SWA. Compared with IDBH, on CIFAR-10 with ResNet-18 and WRN-34-10, DAT with AWP and SWA achieves robust improvement $\sim$1.36\% and $\sim$1.18\% against PGD-20, and $\sim$0.45\% and $\sim$0.48\% against AA. For complex CIFAR-100 with ResNet-18 and WRN-34-10, compared with the previous best method, our method obtain $\sim$1.8\% and $\sim$1.54\% improvement against PGD-20, and $\sim$0.45\% and $\sim$0.30\% enhancement against AA. Additionally, we perform with synthesized data and augmentation strategies, attached in Appendixes~\ref{SD} and~\ref{sec:Comparison of DAT with Different Augmentations}.

\begin{table*}[tb]
\caption{The average experimental results for methods with complex strategies against $\ell_\infty$ threat model with $\epsilon=\nicefrac{8}{255}$ in 7 runs. The best results are \textbf{boldfaced}.}
\label{hicks}
\centering
\resizebox{1\linewidth}{!}{
\begin{tabular}{l|cccccccc}
\toprule
\multirow{3}*{\diagbox[dir=SE]{\sc \textbf{Method}}{\sc \textbf{Dataset}}{\sc \textbf{Architecture}}} & \multicolumn{4}{c}{ResNet-18}& \multicolumn{4}{c}{WRN-34-10} \\
\cmidrule(lr){2-5} \cmidrule(lr){6-9}
&\multicolumn{2}{c}{CIFAR-10}&\multicolumn{2}{c}{CIFAR-100}&\multicolumn{2}{c}{CIFAR-10}&\multicolumn{2}{c}{CIFAR-100}\\ \cmidrule(lr){2-3} \cmidrule(lr){4-5} \cmidrule(lr){6-7} \cmidrule(lr){8-9} 
&PGD-20&AA&PGD-20&AA&PGD-20&AA&PGD-20&AA\\ 
\midrule
TRADES+AWP           & 54.56$\pm$0.06       & 50.31$\pm$0.10  & 31.53$\pm$0.14       & 26.06$\pm$0.12 & 59.26$\pm$0.24       & 55.28$\pm$0.21& 34.48$\pm$0.26       & 29.74$\pm$0.21             \\
TRADES+AWP+SWA   & 55.21$\pm$0.24  & 51.14$\pm$0.13& 31.72$\pm$0.23  & 26.21$\pm$0.15 & 60.25$\pm$0.26 & 55.37$\pm$0.15 & 35.16$\pm$0.23  & 29.92$\pm$0.16               \\
OA-AT (SWA+variable $\epsilon$ and $\alpha$) \cite{DBLP:conf/eccv/AddepalliJSB22} & 56.47$\pm$0.37 & 50.83$\pm$0.24 & 32.63$\pm$0.25   & 26.84$\pm$0.36 & 60.49$\pm$0.31 & 57.91$\pm$0.18 & 36.18$\pm$0.27   & 30.35$\pm$0.23                 \\
DAJAT (AWP+SWA+variable $\epsilon$\&$\alpha$) \cite{DBLP:conf/nips/AddepalliJR22}  &   56.52$\pm$0.47              &  51.85$\pm$0.26      &    32.96$\pm$0.32  &    27.83$\pm$0.29 &   62.34$\pm$0.35              &  56.62$\pm$0.23    &    37.05$\pm$0.14  &    31.51$\pm$0.17                \\  
IDBH (AWP+SWA+variable $\epsilon$) \cite{Li2023DataAA} & 57.48$\pm$0.34 & 52.31$\pm$0.26 & 33.67$\pm$0.27 & 27.86$\pm$0.32  & 62.47$\pm$0.23 & 57.64$\pm$0.26 & 36.46$\pm$0.23 & 31.34$\pm$0.22                \\
\rowcolor{gray!20}
DAT+AWP (Ours) &  \textbf{58.57$\pm$0.14}   &  \textbf{52.54$\pm$0.12} &     \textbf{35.29$\pm$0.13} & \textbf{28.09$\pm$0.12} &  \textbf{63.34$\pm$0.18}   &  \textbf{57.96$\pm$0.16} &     \textbf{38.41$\pm$0.17} & \textbf{31.62$\pm$0.12}       \\
\rowcolor{gray!20}
DAT+AWP+SWA (Ours) &   \textbf{58.84$\pm$0.16}   &  \textbf{52.76$\pm$0.14} &     \textbf{35.47$\pm$0.11} & \textbf{28.31$\pm$0.13} &   \textbf{63.65$\pm$0.19}   &  \textbf{58.12$\pm$0.18} &     \textbf{38.59$\pm$0.16} & \textbf{31.81$\pm$0.12}         \\
\bottomrule
\end{tabular}
}
\end{table*}

\begin{table*}[tb]
\caption{Results of ablation studies with ResNet-18 against $\ell_\infty$ with $\epsilon=\nicefrac{8}{255}$ average in 7 runs. }
\label{ablation}
\centering
\resizebox{1.0\linewidth}{!}{
\begin{tabular}{l|cccccc}
\toprule
\multirow{2}*{\diagbox{\sc \textbf{Method}}{\sc \textbf{Dataset}}} & \multicolumn{3}{c}{CIFAR-10}& \multicolumn{3}{c}{CIFAR-100} \\
\cmidrule(lr){2-4}  \cmidrule(lr){5-7} 
&Natural&PGD-20&AA&Natural&PGD-20&AA\\ 
\midrule
 Baseline  &   82.87$\pm$0.46              &  52.76$\pm$0.51      &    49.52$\pm$0.62  &    58.27$\pm$0.52   &    29.89$\pm$0.58    &    24.92$\pm$0.39           \\  
DAT w/o $\mathcal{D}_{\mathsf{JS}} $  &       83.76$\pm$0.49       & 57.06$\pm$0.51          & 51.25$\pm$0.22  & 60.92$\pm$0.46     & 32.83$\pm$0.47     & 26.53$\pm$0.28           \\                        
   DAT w/o Mix-up           & 81.45$\pm$0.49             & 55.64$\pm$0.54 & 50.13$\pm$0.35   & 59.84$\pm$0.42   & 30.83$\pm$0.46  & 25.41$\pm$0.29               \\                    
DAT w/o Split BN           & 82.37$\pm$0.43                    & 53.07$\pm$0.53 & 46.53$\pm$0.32  & 56.51$\pm$0.55   & 28.47$\pm$0.34   & 24.91$\pm$0.39                \\  
DAT w/o AAG    & 83.41$\pm$0.53                      &  55.32$\pm$0.62   & 49.81$\pm$0.31 & 62.19$\pm$0.73 & 30.27$\pm$0.66  & 25.82$\pm$0.42                \\ 
DAT &   \textbf{84.17$\pm$0.21}  &  \textbf{57.55$\pm$0.15} &     \textbf{51.36$\pm$0.14} & \textbf{62.57$\pm$0.17}           &  \textbf{33.37$\pm$0.15}   & \textbf{27.11$\pm$0.15} \\
\bottomrule
\end{tabular}
}
\end{table*}

\subsection{Ablation Study}\label{ablation study}
Through experiments on CIFAR-10 and CIFAR-100, this subsection discusses the impact of different components in DAT on the model's natural accuracy and robustness against PGD-20 and AA. As shown in Table~\ref{ablation}, we use the model with the proposed AE generation method without recombined data as the baseline. Compared with complete DAT, the robustness is decreased $\sim$0.3\% and $\sim$0.5\% on CIFAR-10 and CIFAR-100 without $\mathcal{D}_{\mathsf{JS}}$. Due to $\mathcal{D}_{\mathsf{JS}}$ keeping the predictions consistent between the benign and recombined data, removing $\mathcal{D}_{\mathsf{JS}}$ makes the model learn less unaffected phase patterns and thus reduces the robust performance. As mentioned in Sec.~\ref{sec:aag}, the mix-up operation combines the generated adversarial and original amplitude spectrum. Since the model still needs some original amplitude information, Table~\ref{ablation} shows that replacing the original amplitude spectrum with the generated one drops $\sim$1.6\% and $\sim$1.9\% model's robustness on CIFAR-10 and CIFAR-100, due to damaging original phase patterns. As mentioned in Sec.~\ref{wdat}, the gap of BN parameters between original and recombined data is quite large. Without split BNs, the robustness of the model is reduced $\sim$4.6\% and $\sim$3.5\% on CIFAR-10 and CIFAR-100, showing that the model robustness is significantly affected because of the convergence difficulty. For experiments without AAG, we mix the amplitude of training sample's frequency spectra as Sec.~\ref{moti}. Table~\ref{ablation} shows the selected amplitude does not severely reduce the natural accuracy but significantly decreases robustness $\sim$1.6\% and $\sim$2.2\% on CIFAR-10 and CIFAR-100, confirming the effectiveness of AAG. To comprehensively explore the effectiveness of AAG, we combine AAG with some existing AT methods on ResNet-18, see Appendixes~\ref{AAGother} and~\ref{AAGlatent}. Additionally, we compare the time consumption in Appendix~\ref{ttcc}.

\section{Conclusion}
This work presents a novel Dual Adversarial Training (DAT) method to improve adversarial robustness against various adversarial attacks. We first illustrate the motivation through some exploration experiments. Subsequently, we delve into the efficient and effective DAT, discussing both its underlying motivation and detailed mechanics. Additionally, we theoretically validate the functionality of the Adversarial Amplitude Generator (AAG) and the convergence properties of the DAT model. Through experiments across multiple datasets against various adversarial attacks, we verify that the proposed DAT significantly improves model robustness. We also explore the hyper-parameters and discuss the function of specific components of DAT. In the future, we will design a more suitable amplitude generation and augmentation strategy to enhance robustness further.

\section*{Acknowledgements}
This work was supported in part by Macau Science and Technology Development Fund under SKLIOTSC-2021-2023, 0072/2020/AMJ, 0022/2022/A, and 0014/2022/AFJ; in part by Research Committee at University of Macau under MYRG-GRG2023-00058-FST-UMDF and  MYRG2022-00152-FST; in part by Natural Science Foundation of Guangdong Province of China under EF2023-00116-FST; in part by the Natural Science Foundation of China under Grant 62202009.

\bibliography{ref}{}
\bibliographystyle{plain}

\newpage
\appendix

\part{Appendix} 
\parttoc 


This appendix provides additional support to the main ideas presented in the submission. The general pipeline is as follows:
\begin{itemize}[leftmargin=*]
    \item \S\ref{ppcodes} presents the algorithms of the proposed efficient AE generation and DAT.
    \item \S\ref{sec:related work} presents comprehensive related works, including adversarial attack and adversarial training methods.
    \item \S\ref{asp} presents additional details of AAG (\eg, mix-up operation and $\lambda$ range).
    \item \S\ref{batchfig} presents the gap of BN parameters for benign and recombined data, showing the reason for the split BN operation in the model training of DAT.
    \item \S\ref{tsd} presents more information about the training settings and selected datasets.
    \item \S\ref{eepts} presents omitted experimental results (\eg, synthesized methods, data augmentation strategies and training time, iteration $K$ and different inputs of AAG).
    \item \S\ref{sec:proofs} presents explanation and proof for Theorem~\ref{prop1}.
\end{itemize}

\section{Pseudocodes of AE Generation and DAT}
\label{ppcodes}

\subsection{Pseudocode of the AE Generation Method}
\label{pae}

\begin{algorithm}[!htbp]
\small
\caption{\sc{Efficient Adversarial Example Generation (EAEG)}}
\label{alg:algorithm}
\begin{algorithmic}[1] 
\Require Mini-batch $\mathcal{B}=\{(\Bx_i,y_i)\}_{i=1}^{M}$, model $f_{\Btheta}$, perturbation radius $\epsilon$, iteration step $K$, step size $\alpha$, weight parameter $\beta$
\Ensure AEs $\{\Bx_i^{\prime}\}_{i=1}^{M}$

\Statex \hspace{-22pt}\texttt{// EAEG Procedure}

\For{$i=1,\dots,M$ (in parallel)}
\State $\Bx_i^{\prime} \gets \Bx_i+0.001 \cdot\bm{\xi}$, where $\bm{\xi}\overset{\mathrm{i.i.d.}}{\sim} \mathcal{N}(\mathbf{0},\mathbf{I})$\Comment{initialize AE}

\For{$k=1$ to $K$}

\State $\mathcal{L}_{\mathsf{AE}}(\Bx_i^{\prime};\Bx_i,y_i,\Btheta)=\mathcal{L}_{\mathsf{CE}}(f_{\Btheta}(\Bx_i^{\prime}),y_i)+\beta\cdot\mathcal{D}_{\mathsf{KL}}(f_{\Btheta}(\Bx_i^{\prime}),f_{\Btheta}(\Bx_i))$\Comment{Eq.~\eqref{eq:LAE}}

\State $\Bx_i^{\prime} \gets \Pi_{\mathcal{S}_\epsilon [\Bx_i^{\prime}]} \left(\Bx_i^{\prime}+ \alpha \cdot \mathrm{sign} \left(\nabla_{\Bx_i^{\prime}}  \mathcal{L}_{\mathsf{AE}}(\Bx_i^{\prime};\Bx_i,y_i,\Btheta)\right)\right)$\Comment{update AE}

\EndFor
\EndFor

\end{algorithmic}
\end{algorithm}

\subsection{Pseudocode of DAT}
\label{pdat}

\begin{algorithm}[!htbp]
\small
\caption{\sc{Dual Adversarial Training (DAT)}}
\label{alg:algorithm1}
\begin{algorithmic}[1] 
\Require Training data $\mathcal{D}=\{(\Bx_i,y_i)\}_{i=1}^N$, mini-batch size $M$, model $f_{\Btheta}$, Adversarial Amplitude Generator $G_{\Bpsi}$, AE generation method $\mathrm{EAEG}$, perturbation radius $\epsilon$, iteration step $K$, step size $\alpha$, weight parameter $\{\beta$, $\omega\}$, training epoch $T$
\Ensure Robust model $f_{\Btheta}$

\Statex \hspace{-22pt}\texttt{// DAT Procedure}

\For{$t=1$ to $T$}
\For{each batch $\mathcal{B}=\{(\Bx_i,y_i)\}_{i=1}^{M}$}
\For{$i=1,\dots,M$ (in parallel)}
\State $\Bx_i^{\prime}=\mathrm{EAEG}(\Bx_i,y_i;\Btheta, \epsilon,K,\alpha,\beta)$\Comment{generate AE for benign data}
\State $\mathcal{A}_{G}(\Bx_i)=G_{\Bpsi}(\mathbf{z},f_{\Btheta}(\Bx_i))$, where $\mathbf{z}\overset{\mathrm{i.i.d.}}{\sim} \mathcal{N}(\mathbf{0},\mathbf{I})$\Comment{generate adversarial amplitude}
\State $\mathcal{A}_{mix}(\Bx_i)=\lambda\mathcal{A}_{G}(\Bx_i)+(1-\lambda)\mathcal{A}(\mathcal{F}(\Bx_i))$, where $\lambda\sim \mathrm{Uniform}(0,1)$\Comment{mix amplitudes}
\State $\hat{\Bx}_i=\mathcal{F}^{-1}(\mathcal{A}_{mix},\mathcal{P}(\mathcal{F}(\Bx_i)))$\Comment{generate recombined sample}
\State $\hat{\Bx}_i^{\prime}=\mathrm{EAEG}(\hat{\Bx}_i,y_i;\Btheta, \epsilon,K,\alpha,\beta)$\Comment{generate AE for recombined sample}
\State $\left.\mathcal{L}_{\mathsf{DAT}}(f_{\Btheta}(\Bx_i),y_i)=\frac{1}{2}\left(\mathcal{L}_{\mathsf{AT}}(f_{\Btheta}(\Bx_i),y_i)+\mathcal{L}_{\mathsf{AT}}(f_{\Btheta}(\hat{\Bx}_i),y_i)\right)+\omega \cdot \mathcal{D}_{\mathsf{JS}}(f_{\Btheta}(\hat{\Bx}_i),f_{\Btheta}(\Bx_i))\right.$
\Statex \Comment{Eq.~\eqref{eq:DAT}}


\EndFor
\State Update $\Bpsi$ and $\Btheta$ by $\mathcal{L}_{\mathsf{DAT}}$
\EndFor
\EndFor
\end{algorithmic}
\end{algorithm}

\newpage
\section{Related Work \& Background}\label{sec:related work}

\subsection{Adversarial Attacks}
With the heightened attention on DNN vulnerabilities, numerous adversarial attack methods have developed to study the model's robustness. FGSM \cite{DBLP:journals/corr/GoodfellowSS14} generates AEs by applying a gradient on benign samples in a single iteration step. Building upon I-FGSM \cite{kurakin2018adversarial}, PGD \cite{DBLP:conf/iclr/MadryMSTV18} emerges as the strongest first-order attack, amplifying the adversarial impact of AEs. 
In order to reduce the difficulty of parameter settings in FGSM, the accurate and efficient Deepfool \cite{DBLP:conf/cvpr/Moosavi-Dezfooli19} is developed. To enhance the effect of PGD attack, APGD, and APGD-DLR are proposed \cite{DBLP:conf/icml/Croce020a}. These methods have alternative loss functions and do not require an inner step size in the AEs' generation procedure. However, these white-box attacks require detailed knowledge of the target model to generate AEs, which poses challenges in practical scenarios. To address this challenge, black-box \cite{DBLP:conf/ccs/PapernotMGJCS17,DBLP:conf/ccs/ChenZSYH17,DBLP:conf/icml/IlyasEAL18,DBLP:conf/nips/ChengDPSZ19,DBLP:conf/cvpr/XieZZBWRY19,DBLP:conf/nips/GuoLC20} and no-box \cite{DBLP:conf/ccs/PapernotMGJCS17,DBLP:conf/nips/LiGC20} adversarial attacks are proposed. For a comprehensive evaluation of model robustness, ensemble adversarial attacks consisting of multiple types of adversarial attacks become popular. AutoAttack (AA) \cite{DBLP:conf/icml/Croce020a} stands out as a representative ensemble attack approach, seamlessly integrating white-box attacks including APGD, APGD-DLR, and FAB \cite{DBLP:conf/icml/Croce020}, alongside a black-box attack called Square attack \cite{DBLP:conf/eccv/AndriushchenkoC20}. AA is broadly recognized as a benchmark tool for the robustness evaluation of models.

\subsection{Adversarial Training}\label{exATm}
In an effort to protect models from adversarial attacks, various methodologies have been developed. Adversarial training is one of the most effective methods for countering adversarial attacks \cite{DBLP:conf/acl/ZouHXDC20, DBLP:conf/acl/ZhangZC020, DBLP:conf/eccv/BaiCLWGXY20, DBLP:conf/iclr/BaiWZL0X21, DBLP:conf/cvpr/DongLPS0HL18, DBLP:conf/eccv/FanWLZLLY20, DBLP:conf/iclr/MadryMSTV18, DBLP:conf/icml/ZhangYJXGJ19, DBLP:conf/nips/AddepalliJR22, Li2023DataAA}. PGD-AT \cite{DBLP:conf/iclr/MadryMSTV18} formulates AT as a min-max optimization problem. AEs are first generated via PGD and then fed into the trained model to minimize empirical risk. To address the robust overfitting issues, early stopping of PGD-AT is proposed to further enhance the model's robustness \cite{DBLP:conf/icml/RiceWK20}. However, PGD-AT only takes AEs into the training procedure. This approach, due to the data distribution shift between AEs and benign samples, can diminish the model's accuracy. To make a better tradeoff between accuracy and robustness, TRADES \cite{DBLP:conf/icml/ZhangYJXGJ19} takes both benign samples and AEs into the AT procedure. Motivated by weight perturbation methods in standard training, Adversarial Weight Perturbation (AWP) \cite{DBLP:conf/nips/WuX020} adversarially perturbs both inputs and weights, markedly easing the robust overfitting issues and improving adversarial robustness. Additionally, Stochastic Weight Averaging (SWA) \cite{DBLP:conf/uai/IzmailovPGVW18} is also a frequently used technology to reduce the negative impact of robust overfitting issues and enhance the model's robustness. Misclassification Aware adveRsarial Training (MART) \cite{DBLP:conf/iclr/0001ZY0MG20} adopts misclassified training samples as regularizers to enhance the model's robustness against AEs generated by various adversarial attacks. Instead of generating AEs by maximizing the prediction distance between AEs and benign samples as in TRADES, Squeeze Training (ST) \cite{DBLP:conf/iclr/LiGZ023a} selects a better reference target and uses collaborative examples to benign ones, produced by minimizing the prediction distance between collaborative and benign samples. Different from existing hand-crafted strategy based AT, Adversarial Training with Learnable Attack Strategy (LAS-AT) \cite{DBLP:conf/cvpr/JiaZW0WC22} generates AEs by a reinforcement learning network. Motivated by Contrastive Language-Image Pre-training (CLIP) \cite{DBLP:conf/icml/RadfordKHRGASAM21}, Semantic Constraint Adversarial Robust Learning (SCARL) \cite{DBLP:journals/tifs/KuangLWJ23} uses the text information to obtain robust semantic information of training samples, improving the model's robustness. Due to the positive impact of data augmentation on AT, Diverse Augmentation-based Joint Adversarial Training (DAJAT) \cite{DBLP:conf/nips/AddepalliJR22} uses AutoAugment, SWA, and AWP to achieve an effective AT. To reduce the time consumption of AE generation, DAJAT adopts variable adversarial perturbation radius $\epsilon$ and inner step size $\alpha$ to reduce the iteration steps. OA-AT \cite{DBLP:conf/eccv/AddepalliJSB22} uses SWA and variable $\epsilon$ and $\alpha$ to protect the model from AEs with large adversarial magnitudes. Li \etal \cite{Li2023DataAA} show that the effectiveness of data augmentations on robustness improvement depends on their impacts on the model's robust accuracy on test data, referred to as hardness. Moreover, the work proves that augmentation strategies with moderate hardness can protect the model from robust overfitting issues and enhance the model's robustness. Based on the phenomena, with variable adversarial perturbation radius $\epsilon$, Improved Diversity and Balanced Hardness (IDBH) \cite{Li2023DataAA} combines several data augmentation strategies to obtain significant robustness improvement against various adversarial attacks. Recently, motivated by the stable diffusion (SD) model \cite{DBLP:conf/nips/KarrasAAL22}, some methods \cite{DBLP:journals/corr/abs-2103-01946,DBLP:conf/nips/GowalRWSCM21,DBLP:journals/corr/abs-2305-13948,DBLP:conf/icml/WangPDL0Y23,DBLP:conf/icml/PangLYZY22} synthesize data by SD to enhance the model's robustness further.

\subsection{Generation of Adversarial Example in Adversarial Training}
\label{premat}
PGD-AT \cite{DBLP:conf/iclr/MadryMSTV18} formulates a min-max strategy to inject AEs into AT. The optimization objective is 
\begin{equation}\label{pgd}
\min\limits_{\Btheta}\sum\limits_{i=1}^{N}\max\limits_{\mathbf{x}^{\prime}_i\in \mathcal{S}_{\epsilon}[\Bx_i]} \mathcal{L}_{\mathsf{CE}}(f_{\Btheta} (\mathbf{x}^{\prime}_i),y_i),
\end{equation}
where $\mathbf{x}^{\prime}_i$ is the AE for $\Bx_i$, and $\mathcal{L}_{\mathsf{CE}}$ represents the cross-entropy (CE) loss. To achieve a better tradeoff between natural and robust accuracy, TRADES \cite{DBLP:conf/icml/ZhangYJXGJ19} incorporates both AEs and benign samples into training as 
\begin{equation}\label{trades}
\begin{split}
    \min\limits_{\Btheta}\sum\limits_{i=1}^{N} \bigg(&\mathcal{L}_{\mathsf{CE}}(f_{\Btheta} (\Bx_i),y_i))+\beta\cdot\max\limits_{\mathbf{x}^{\prime}_i\in \mathcal{S}_{\epsilon}[\Bx_i]} \mathcal{D}_{\mathsf{KL}}(f_{\Btheta} (\mathbf{x}^{\prime}_i),f_{\Btheta} (\Bx_i))\bigg),
\end{split}
\end{equation}
where $\beta$ is a weight parameter, and $\mathcal{D}_{\mathsf{KL}}$ represents the Kullback-Leibler (KL) divergence. 

For $\left.(\mathbf{x},y)\in\mathcal{D}\right.$, its corresponding AE $\mathbf{x}^{\prime}$ is initialized by adding a random noise (\eg, uniform noise in PGD and Gaussian noise in TRADES) on $\Bx$, then iteratively updated by $K$ steps following
\begin{equation}\label{pgditer}
    \mathbf{x}^{\prime(k+1)} = \Pi_{\mathcal{S}_\epsilon [\mathbf{x}]} \big(\mathbf{x}^{\prime(k)}+\alpha\cdot\mathrm{sign} (\nabla_{\mathbf{x}^{\prime(k)}} \mathcal{L}(f_{\Btheta}(\mathbf{x}^{\prime(k)}),y))\big),
\end{equation}
where $\Pi(\cdot)$ is the projection operator, $\alpha$ is the projection inner step size, and $k\in\{0,...,K-1\}$, with $K$ typically set to 10.

\subsection{Discrete Fourier Transform}\label{FDFT}
DFT transforms an image signal from the spatial domain into the frequency domain, while the inverse discrete Fourier transform (IDFT) reverses this process. Let $\mathcal{F}(\cdot)$ and $\mathcal{F}^{-1}(\cdot, \cdot)$ denote the DFT and IDFT functions, respectively. Typically, DFT is independently applied to each channel of an image within the pixel space. An image $\mathbf{x}$ can be transformed into the frequency domain as follows:
\begin{equation*}
    \mathcal{F}(\mathbf{x})(u,v)=\sum_{h=1}^{H}\sum_{w=1}^{W}\mathbf{x}(h,w)\,\mathrm{e}^{-\mathrm{i}2\pi (u\frac{h}{H}+v\frac{w}{W})},
\end{equation*}
where $(h,w)$ denotes the pixel coordinates of $\mathbf{x}$, and $\left.(u,v)\in [H] \times [W]\right.$ signifies coordinates in the frequency domain. The real and imaginary parts of $\mathcal{F}(\mathbf{x})$ are denoted by $\mathrm{Re}(\mathcal{F}(\mathbf{x}))$ and $\mathrm{Im}(\mathcal{F}(\mathbf{x}))$, respectively. Then, the amplitude spectrum $\mathcal{A}(\mathbf{x})$ and phase spectrum $\mathcal{P}(\mathbf{x})$ are defined as follows:
\begin{equation}
\label{fourierm}
    \mathcal{A}(\mathbf{x})=\left(\mathrm{Re}^2(\mathcal{F}(\mathbf{x}))+\mathrm{Im}^2(\mathcal{F}(\mathbf{x}))\right)^{\frac{1}{2}},\quad
    \mathcal{P}(\mathbf{x})=\arctan\left(\frac{\mathrm{Im}(\mathcal{F}(\mathbf{x}))}{\mathrm{Re}(\mathcal{F}(\mathbf{x}))}\right).
\end{equation}

\section{Experiments of the Amplitude Spectrum Operation}
\label{asp}
To better explore the proposed AAG, we present the amplitude spectrum mix-up operation visually and explore the impact of the $\lambda$ range on the model's robustness in this part. First, we display and explain the visual results of the recombined samples and show the impact of the mix-up operation on the model's performance. Then, the experimental results of DAT with different ranges of $\lambda$ are shown.

\subsection{Visualization of the Mix-up Operation on Amplitude Spectrum}
Figure~\ref{dft_img} (see page~\pageref{dft_img}) shows the benign samples, amplitude spectrum, phase spectrum, generated amplitude spectrum, and recombined data. Compared to images with mixed amplitude spectrum, data recombined by replacing the original amplitude from the generated one entirely has a great gap for the original benign samples, reducing the natural and robust accuracy of the benign test samples. To better show the impact of the different strategies on the model's performance, we perform experiments with and without the mix-up operation. Table~\ref{wmixup} shows the DAT with mix-up has a better performance. That indicates the model still needs some original information on the amplitude spectrum, showing the effectiveness of the mix-up operation.

\begin{table*}[!htb]
\caption{Average experimental results of DAT with and without Mix-up on CIFAR-10 and CIFAR-100 with ResNet-18 and $\epsilon=\nicefrac{8}{255}$ in 7 runs.}
\centering
\label{wmixup}
\scalebox{0.8}{
\begin{tabular}{lcccccc}
\toprule

\multirow{2}*{\sc \textbf{Method}} & \multicolumn{3}{c}{CIFAR-10} & \multicolumn{3}{c}{CIFAR-100}\\
\cmidrule(lr){2-4} \cmidrule(lr){5-7}
&Natural&PGD-20&AA &Natural&PGD-20&AA\\ 
\midrule
     
    DAT w/o Mix-up           & 81.45$\pm$0.49             & 55.64$\pm$0.54 & 50.13$\pm$0.35          & 59.84$\pm$0.42   & 30.83$\pm$0.46  & 25.41$\pm$0.29       \\                    

DAT w/ Mix-up &   \textbf{84.17$\pm$0.21}   &  \textbf{57.55$\pm$0.15} &     \textbf{51.36$\pm$0.14} & \textbf{62.57$\pm$0.17}           &  \textbf{33.37$\pm$0.15}   & \textbf{27.11$\pm$0.15} \\
\bottomrule
\end{tabular}
}
\end{table*}

\subsection{Impact of the Range of Amplitude Mixture Parameters}
\label{lrange}
In this subsection, to fully explore the impact of the $\lambda$ range on the model's robustness, we perform some experiments to show the impact of different $\lambda$ on the natural and robust accuracy. As shown in the work, the mix-up operation to perturb $\mathcal{A}(\mathbf{x})$ by the generated $\mathcal{A}_{G}(\mathbf{x})$ follows 
\begin{equation}
    \mathcal{A}_{mix}(\mathbf{x})=\lambda \mathcal{A}_{G}(\mathbf{x})+(1-\lambda)\mathcal{A}(\mathbf{x}),
\end{equation}
where $0\left.<\lambda<1\right.$, reserving some information of $\mathcal{A}(x)$. Suppose that $\left.\lambda \sim \mathrm{Uniform}(0,\mu)\right.$, where $0\left.<\mu<1\right.$, we perform experiments with different values of $\mu$ on CIFAR-10 and CIFAR-100 against AA, where $\beta=15$ and $\omega=2$. As shown in Figures~\ref{muc} and~\ref{mur}, the test accuracy fluctuations between different values of $\mu$ are quite large, and the best settings of $\mu$ for natural and robust accuracy for CIFAR-10 and CIFAR-100 are different. For CIFAR-10, the model achieves the best natural performance when $\mu=0.6$, while its robust accuracy is better with $\mu=0.8$. For results on CIFAR-100, $\mu=0.4$ makes the best natural accuracy when the model achieves better robustness with $\mu=0.6$. In the work, there are already two hyper-parameters: $\omega$ and $\beta$. Moreover, the range of $\lambda$ is more likely to influence the settings of $\omega$ and $\beta$. Consequently, we do not discuss the impact of different ranges of $\lambda$ on the model's robust and natural accuracy in the main paper.

For the real-valued $\mathbf{x}$, it is noticed that $\mathcal{F}(\cdot)$ need to be even-conjugate, \ie, $\mathcal{F}(\mathbf{x})(-u,-v)=\overline{\mathcal{F}(\mathbf{x})(u,v)}$, implying that the amplitude spectrum is symmetric. Conversely, IDFT returns real-valued signals for a symmetric amplitude spectrum. Consequently, $G_{\Bpsi}$ only generates $\mathcal{A}_{G}$ with the non-redundant and non-negative part of the amplitude spectrum. 

\begin{figure}
    \subfloat[Natural accuracy of DAT with different $\mu$.]{
    \label{muc}
    \includegraphics[width=0.45\linewidth]{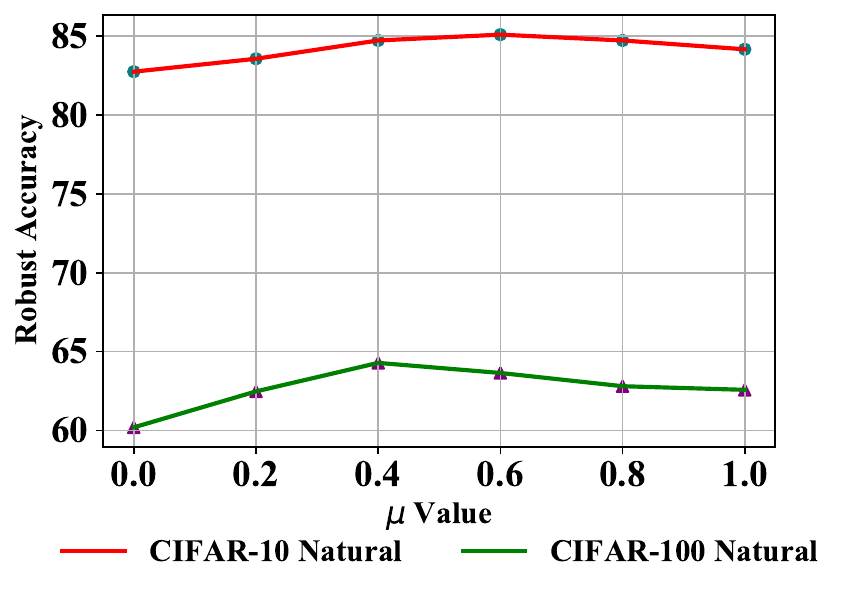}
    }
    \hfill
    \subfloat[Robust accuracy against AA of DAT with different $\mu$.]{
    \label{mur}
    \includegraphics[width=0.45\linewidth]{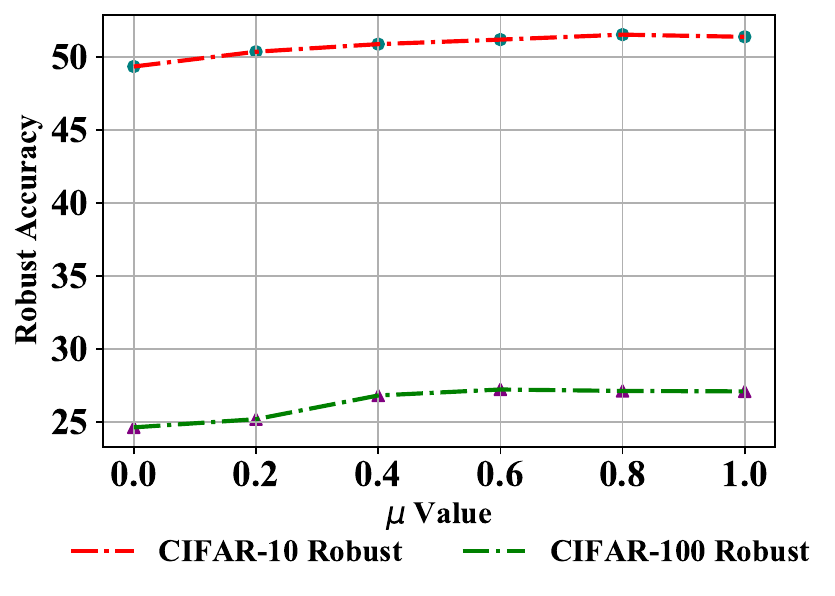}
    }
    \caption{Natural and robust accuracy (\%) against AA of DAT with different $\mu$ on CIFAR-10 and CIFAR-100 with ResNet-18.}
    \label{mu}
\end{figure}

\section{Batch Normalization Parameter Analysis}
\label{batchfig}
In the DAT training procedure, we use different BNs for the benign data and their recombined samples. In the ablation study, we show the model performances with and without a split BN are quite different, showing that the single BN significantly influences the model's performance. To better present the necessity of the split BN in DAT training, we show the cosine similarity of the parameters of each BN layer between benign data and recombined ones. There are 20 BN layers in ResNet-18. From Figures~\ref{bn-c10} and~\ref{bn}, on both CIFAR-10 and CIFAR-100, we can see the cosine similarity gaps for BN parameters in different layers. The distance between the mean and variance of each BN layer is small, while the gaps between the $\gamma$s and $\beta$s are large, especially in the low-level layers and the last layer. Consequently, using a single BN in the training procedure of DAT is likely to result in the difficulty of model convergence, showing as the performance comparisons in the ablation study. To solve the issue, we adopt different BNs for the original data and the recombined ones. In the model training procedure of DAT, we split the BN into two groups: BN-A and BN-B, for each layer with BN, original data and its AE are processed by BN-A, while recombined one and its AE are regularized by BN-B.

\begin{figure}[t]
\centering
    \subfloat[Cosine similarity between the BN parameters on CIFAR-10.]{
    \label{bn-c10}
    \includegraphics[width=0.45\linewidth]{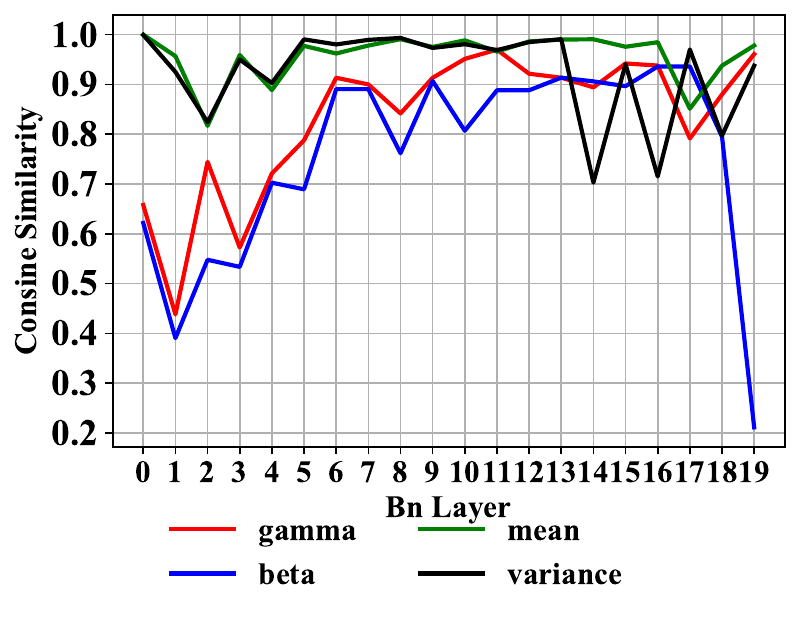}
    }
    \hfill
    \subfloat[Cosine similarity between the BN parameters on CIFAR-100.]{
    \label{bn}
    \includegraphics[width=0.45\linewidth]{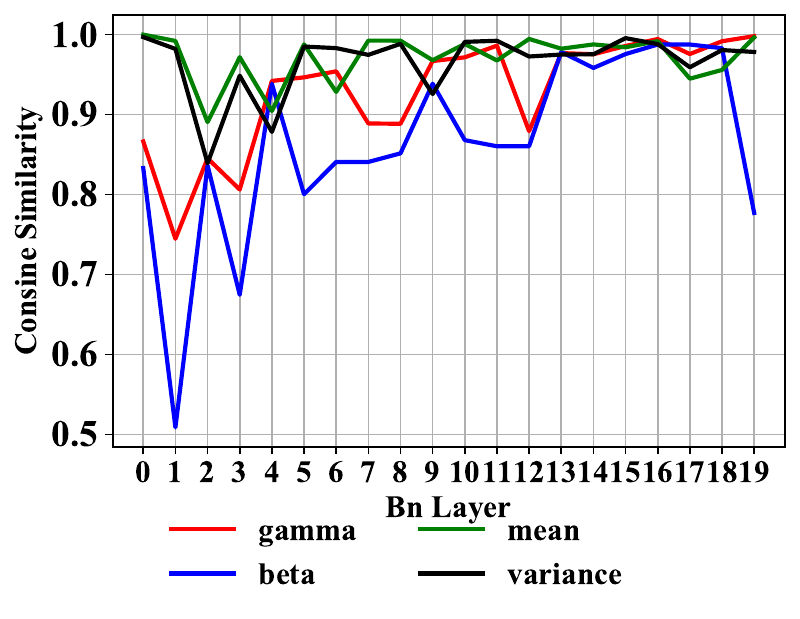}
    } 
    \caption{Cosine similarity between the BN parameters of the original data and recombined ones in ResNet-18 on (a) CIFAR-10 and (b) CIFAR-100.}
    \label{bns}
\end{figure}

\section{Detailed Experimental Setup}
\label{tsd}
In this section, we add more details about the datasets and experimental settings. Moreover, the information about the robust evaluation is further introduced.

\subsection{Datasets}
\label{data}
To evaluate the performance of DAT, we select three widely used benchmark datasets for robust evaluation: CIFAR-10, CIFAR-100, and complex Tiny ImageNet \cite{DBLP:conf/cvpr/DengDSLL009}. CIFAR-10 and CIFAR-100 contain 50,000 32\texttimes32 training samples and 10,000 32\texttimes32 test images, categorized into 10 and 100 classes respectively. Tiny ImageNet is a challenge 200-class real-world dataset, where there are 500 training and 50 test images for each category, where the image size is 64\texttimes64. Moreover, due to the samples in the test set of Tiny ImageNet without labels, we evaluate the robustness of the validation set following \cite{DBLP:conf/cvpr/JiaZW0WC22,DBLP:journals/tifs/KuangLWJ23}.

\subsection{Training Settings}
\label{traset}
We use ResNet-18, WRN-34-10, and WRN-28-10 as model architectures. Experiments with ResNet-18 are performed on Ubuntu 20.04.3 LTS GPU server with Intel Xeon 5120 and 5\texttimes3090 by PyTorch 2.0, while WRN-34-10 and WRN-28-10 experiments are performed on DGX with a H800 GPU on PyTorch 2.0. In the model training procedure, we adopt an SGD optimizer with momentum 0.9 and weight decay 5e-4. For the common experiments, the model is trained for 150 epochs for CIFAR-10 and CIFAR-100 and 100 epochs for Tiny ImageNet. Moreover, the learning rate follows the schedule $[0.1, 0.01, 0.001]$ in decay epoch schedule $[100, 110]$ in CIFAR-10 and CIFAR-100 and in decay epoch schedule $[75, 80]$ for Tiny ImageNet. For these datasets, the experiments with AWP or the combination of AWP and SWA use the same learning rate schedule with decay epoch as $[100, 150]$. 

For the proposed AAG, we use a four-linear layer structure to build it. Since only the non-negative part of the amplitude spectrum is produced, the output of AAG is processed by a Sigmoid function. In the model training procedure, we set the dimension of $\mathbf{z}$ as $\tau=100$. Due to the input of AAG combined by $\mathbf{z}$ and sample logits extracted by $f_{\Btheta}$, the input dimension of AAG is the sum of $\tau$ and class number $c$ of $\mathcal{D}$. 

The AAG is optimized by an SGD optimizer with a fixed learning rate 0.1, momentum 0.9, and weight decay 5e-4. For hyper-parameters, $\beta$ and JS weight parameter $\omega$ are set as 15 and 2 respectively. During the training procedure, we adopt the basic data augmentation strategies, Random Crop and Random Horizontal Flip, for all selected datasets. For AE generation, the inner step size $\alpha$ is set to $\nicefrac{2}{255}$ with $K=5$ to generate adversarial perturbation $\ell_\infty$-bounded  with radius $\epsilon=\nicefrac{8}{255}$ following previous work \cite{DBLP:conf/cvpr/JiaZW0WC22,DBLP:conf/iclr/LiGZ023a,DBLP:journals/tifs/KuangLWJ23}.

\subsection{Baselines}
\label{exbl}
For experimental result comparisons in this work, we select two typical PGD-AT and TRADES \cite{DBLP:conf/iclr/MadryMSTV18, DBLP:conf/icml/ZhangYJXGJ19} as baselines. Moreover, we adopt two types of existing methods to perform experimental result comparisons. For common type AT methods, which are trained with fixed $\epsilon=\nicefrac{8}{255}$ and inner step size $\alpha=\nicefrac{2}{255}$ and without any extra technologies, we select MART \cite{DBLP:conf/iclr/0001ZY0MG20}, ST \cite{DBLP:conf/iclr/LiGZ023a}, LAS-AT \cite{DBLP:conf/cvpr/JiaZW0WC22} and SCARL \cite{DBLP:journals/tifs/KuangLWJ23}. Additionally, some methods with complex strategies, such as AWP, SWA, variable $\epsilon$, and changeable $\alpha$ are also selected as baselines. For AutoAugment-based DAJAT \cite{DBLP:conf/nips/AddepalliJR22} consisting of AT, SWA, variable $\epsilon$ and $\alpha$, we select it with one augmentation for a fair comparison. OA-AT \cite{DBLP:conf/eccv/AddepalliJSB22} is a variable $\epsilon$ based AT method with SWA. IDBH \cite{Li2023DataAA} is an AT method based on the augmentation combination of Cropping, CutOut, and ColorShape, and is combined with AWP and SWA strategies.

\subsection{Robustness Evaluation}
\label{robeva}
For the experimental results, we report the model's natural and robust accuracy. We select several widely used types of adversarial attacks to generate AEs for robustness evaluation. FGSM, PGD-20, PGD-100, and C\&W$_\infty$ are selected as basic methods to evaluate the model's robustness. To better show the model's robust generalization against different adversarial attacks, we also select AA consisting of black-box and white-box methods to evaluate the robustness of the model. The adversarial perturbation of these methods is $\ell_\infty$-bounded with radius $\epsilon=\nicefrac{8}{255}$ and inner step size $\alpha=\nicefrac{2}{255}$.

\section{Additional Results and Discussion}
\label{eepts}
Due to the limitation of the pages, we perform some experiments on CIFAR-10 and CIFAR-100 to show the effectiveness of our proposed DAT further. In the first section, we show the experimental results on generated data with WRN-28-10. The experimental comparison between proposed DAT and some data augmentation strategies are presented in the second section. Then, we show the time consumption comparison between DAT and existing methods in the third part. The fourth subsection discusses the impact of different $\beta$ and $\omega$. Then, the impact of  iteration $K$ of AE generation on the model's performance is explored in the fifth subsection. In the sixth part, the AAG is combined with existing methods to verify its effectiveness. Then, we explore the influence of AAG's input. The last two subsections show the results comparison of single and dual AE generation of DAT and limitations of the work. 

\subsection{Comparison for Different AT Methods with Generated Data}\label{SD}
In this subsection, following \cite{DBLP:journals/corr/abs-2103-01946,DBLP:conf/nips/GowalRWSCM21,DBLP:journals/corr/abs-2305-13948,DBLP:conf/icml/WangPDL0Y23,DBLP:conf/icml/PangLYZY22}, we take advantage of the diffusion model to synthesize data enhance the model's robustness further. Since these existing methods are combined with strategies such as AWP, SWA, variable $\epsilon$ and changed $\alpha$, we use the DAT with AWP and SWA to show the experimental results comparison. Due to methods only providing single-time results, we just use the one-time test accuracy of DAT to compare with them on WRN-28-10. From Table~\ref{sdaugc10}, compared with IKL-AT \cite{DBLP:journals/corr/abs-2305-13948}, we can see DAT achieves about $\sim$0.69\% and $\sim$0.48\% robustness improvement with 1M and 20M generated data on CIFAR-10. On CIFAR-100, as shown in Table~\ref{sdaugc100}, for current IKL-AT, the method with the best robustness against AA with WRN-28-10 on the RobustBench \cite{croce2020robustbench}, the model robustness is enhanced by $\sim$0.29\% and $\sim$0.23\% with 1M and 50M synthesized samples, respectively.

\begin{table*}[tb]
\caption{The average experimental results for different augmentations against $\ell_\infty$ threat model with $\epsilon=\nicefrac{8}{255}$ on CIFAR-10. \#Aug. refers to the number of augmentation. The best results are \textbf{boldfaced}.}
\centering
\label{sdaugc10}
\scalebox{0.8}{
\begin{tabular}{lcrcc}
\toprule
{\sc \textbf{Method}} 
&Architecture&\#Aug.&Natural&AA\\ 
\midrule
Rebuffi \etal \cite{DBLP:journals/corr/abs-2103-01946} &  WRN-28-10      &    1M &    87.33    &    60.73 \\
Gowal \etal \cite{DBLP:conf/nips/GowalRWSCM21}  &   WRN-28-10              &  100M      &    87.50&    63.38            \\  
Wang \etal \cite{DBLP:conf/icml/WangPDL0Y23} & WRN-28-10             & 1M & 91.12   & 63.35                  \\                    
Wang \etal \cite{DBLP:conf/icml/WangPDL0Y23} & WRN-28-10         & 50M & 92.27  & 67.17                   \\ 
Wang \etal \cite{DBLP:conf/icml/WangPDL0Y23} & WRN-28-10         & 20M & 92.44  & 67.31                   \\ 
 IKL-AT \cite{DBLP:journals/corr/abs-2305-13948}& WRN-28-10         & 1M & 90.75  & 63.54                   \\ 
 IKL-AT \cite{DBLP:journals/corr/abs-2305-13948}& WRN-28-10         & 20M & 92.16  & 67.75                   \\ 
DAT+AWP+SWA (Ours) &   WRN-28-10   &  1M &     91.37 & 64.25         \\
DAT+AWP+SWA (Ours) &   WRN-28-10   &  20M &     92.86 & 68.18         \\
\bottomrule
\end{tabular}
}
\end{table*}

\begin{table*}[tb]
\caption{The average experimental results for different augmentations against $\ell_\infty$ threat model with $\epsilon=\nicefrac{8}{255}$ on CIFAR-100. \#Aug. refers to the number of augmentation. The best results are \textbf{boldfaced}.}
\centering
\label{sdaugc100}
\scalebox{0.8}{
\begin{tabular}{lcrcc}
\toprule
{\sc \textbf{Method}} 
&Architecture&\#Aug.&Natural&AA\\ 
\midrule
Pang \etal \cite{DBLP:conf/icml/PangLYZY22} &  WRN-28-10      &    1M &    62.08    &    31.40 \\
Rebuffi \etal \cite{DBLP:journals/corr/abs-2103-01946}  &   WRN-28-10              &  1M      &    62.41&    32.06            \\                     
Wang \etal \cite{DBLP:conf/icml/WangPDL0Y23} & WRN-28-10         & 1M & 68.06  & 35.65                   \\
Wang \etal \cite{DBLP:conf/icml/WangPDL0Y23} & WRN-28-10         & 50M & 72.58  & 38.83                  \\ 
 IKL-AT \cite{DBLP:journals/corr/abs-2305-13948}& WRN-28-10         & 1M & 68.99  & 35.89                   \\ 
 IKL-AT \cite{DBLP:journals/corr/abs-2305-13948}& WRN-28-10         & 50M & 73.85  & 39.18                   \\ 
DAT+AWP+SWA (Ours) &   WRN-28-10   &  1M &     69.52 & 36.18         \\
DAT+AWP+SWA (Ours) &   WRN-28-10   &  50M &     74.86 & 39.41         \\
\bottomrule
\end{tabular}
}
\end{table*}
\begin{table*}[tb]
\caption{The average experimental results for different augmentations against $\ell_\infty$ threat model with $\epsilon=\nicefrac{8}{255}$ in 7 runs. The best results are \textbf{boldfaced}.}
\centering
\small
\label{aug}
\begin{tabular}{lcccc}
\toprule
\multirow{2}*{\sc \textbf{Method}} & \multicolumn{2}{c}{CIFAR-10}& \multicolumn{2}{c}{CIFAR-100} \\
\cmidrule(lr){2-3} \cmidrule(lr){4-5}
&PGD-20&AA&PGD-20&AA\\ 
\midrule
Baseline &  53.13$\pm$0.51      &    49.64$\pm$0.62 &    30.09$\pm$0.58    &    25.43$\pm$0.39 \\
CutOut \cite{DBLP:journals/corr/abs-1708-04552} & 55.85$\pm$0.51             & 50.28$\pm$0.14 & 31.35$\pm$0.44   & 26.26$\pm$0.14                  \\
CutMix \cite{DBLP:conf/iccv/YunHCOYC19}  &   55.76$\pm$0.42              &  50.13$\pm$0.54      &    31.26$\pm$0.62  &    26.17$\pm$0.19            \\                     
AutoAugment \cite{DBLP:conf/cvpr/CubukZMVL19} & 56.24$\pm$0.45         & 50.42$\pm$0.15 & 31.69$\pm$0.52  & 26.44$\pm$0.17                   \\  
\rowcolor{gray!20}
DAT (Ours) &   \textbf{57.55$\pm$0.15}   &  \textbf{51.36$\pm$0.14} &     \textbf{33.37$\pm$0.15} & \textbf{27.11$\pm$0.15}         \\
\bottomrule
\end{tabular}
\end{table*}
\subsection{Comparison of DAT with Different Augmentations}\label{sec:Comparison of DAT with Different Augmentations}
Since the recombined data based on AAG can be regarded as an augmentation for benign samples, to better present the effectiveness of the proposed strategy, we combine the proposed AE generation method with three widely used augmentation policies in AT, such as CutOut \cite{DBLP:journals/corr/abs-1708-04552}, CutMix \cite{DBLP:conf/iccv/YunHCOYC19}, and AutoAugment \cite{DBLP:conf/cvpr/CubukZMVL19}, and perform experiments on CIFAR-10 and CIFAR-100. The settings of CutMix and CutOut are as \cite{Li2023DataAA}, while AutoAugment follows \cite{DBLP:conf/nips/AddepalliJR22}. In these experiments, the baseline is AT with the proposed AE generation strategy. These selected data augmentation strategies adopt the same training strategy as DAT. As shown in Table~\ref{aug}, for the baseline, these augmentations enhance the model's robustness against PGD-20 and AA. Compared to other strategies, DAT achieves robustness improvement of $\sim$1\% on CIFAR-10 and $\sim$1.2\% on CIFAR-100.
\subsection{Comparison of Training Time Consumption }
\label{ttcc}
In this part, we compare the time consumption of different methods on CIFAR-10 and CIFAR-100. From Table~\ref{attime}, we can obtain the time consumption of DAT is slightly higher than PGD-AT and TRADES because of added AAG. Compared with recently proposed methods, DAT takes less time and achieves better robustness.

\begin{table*}[!ht]
\caption{Time consumption (s) of each training epoch for different AT methods.}
\label{attime}
\small
\centering
\begin{tabular}{lcc}
\toprule
\sc \textbf{Method} & CIFAR-10 & CIFAR-100 \\ 
\midrule
PGD-AT \cite{DBLP:conf/iclr/MadryMSTV18} & 187 & 188 \\ 
TRADES \cite{DBLP:conf/icml/ZhangYJXGJ19} & 187 & 192 \\ 
ST \cite{DBLP:conf/iclr/LiGZ023a} & 320 & 326 \\
SCARL \cite{DBLP:journals/tifs/KuangLWJ23} & 221 & 228 \\ 
\rowcolor{gray!20}
DAT (Ours)& 218 & 221 \\ 
\bottomrule
\end{tabular}
\end{table*}

\subsection{Hyper-parameter Exploration}\label{be-om}
After the performance comparisons, we then explore the settings of $\beta$ and $\omega$ by experiments on CIFAR-10 and CIFAR-100. Additionally, the $\lambda$'s range and the iteration step $K$ of AE generation are also discussed, attached to Appendixes~\ref{lrange} and~\ref{iteraAE}.

\textbf{Setting of $\beta$.}\hspace{.5em}As shown in Figure~\ref{beta} of Appendix~\ref{be-om}, we present the model's natural and robust accuracy with different $\beta$ with $\omega=2$ and $K=5$. We can see that the curves of robust accuracy rise first and then fall. Meanwhile, natural accuracy keeps decreasing as $\beta$ increases, indicating that $\beta$ significantly influences the model's natural and robust accuracy. To achieve a better tradeoff between robust and natural accuracy, we set $\beta=15$ for experiments of DAT on all datasets.

\textbf{Setting of $\omega$.}\hspace{.5em}With $\beta=15$ and $K=5$, we discuss the settings of $\omega$. Figure~\ref{omega} shows the model's accuracy changes along with $\omega$ on both natural and AE. When $\omega<2$, the curves of robust accuracy rise. On the contrary, $\omega>2$ results in the performance decreasing. Unlike the robust accuracy, the model's performance on benign samples grows with the increase of $\omega$. For better model robustness, $\omega$ is set as $2$ in this work.

\begin{figure}[!ht]
\centering
    \subfloat[Impact of $\beta$ on natural and robust accuracy.]{
    \label{beta}
    \includegraphics[width=0.46\linewidth]{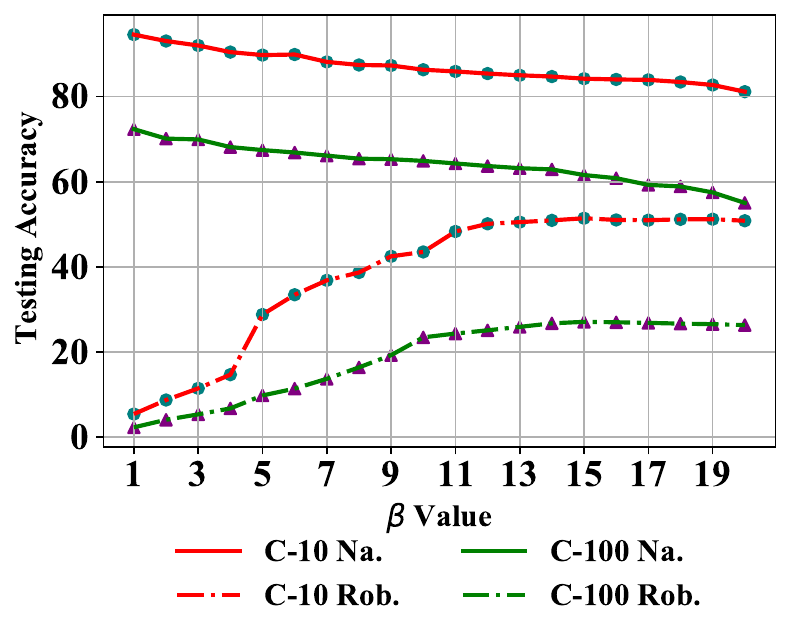}
    }
    \hfill
    \subfloat[Impact of $\omega$ on natural and robust accuracy.]{
    \label{omega}
    \includegraphics[width=0.46\linewidth]{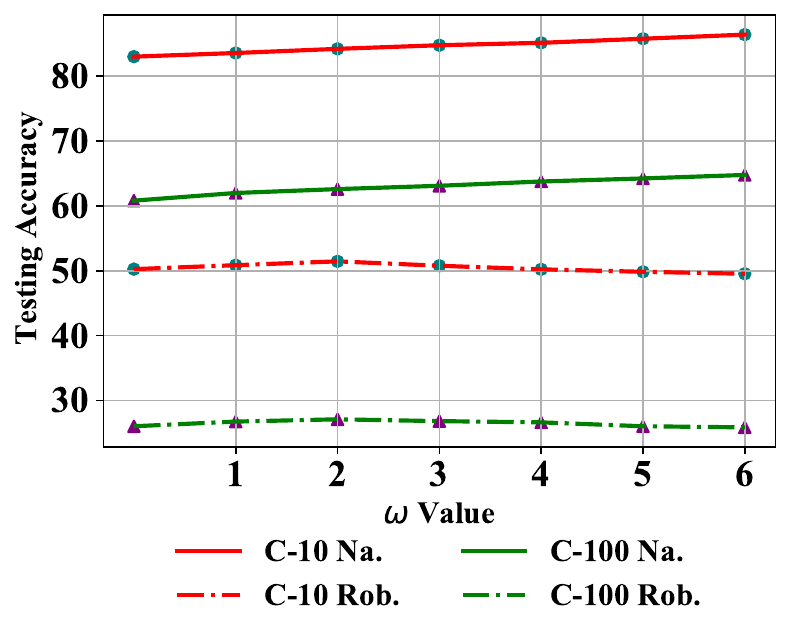}
    }
    \caption{The impact of $\omega$ and $\beta$ on the model's natural (Na.) and robust accuracy (Rob.) against AA on CIFAR-10 (C-10) and CIFAR-100 (C-100) with ResNet-18.}
    \label{hyper-param}
\end{figure}

\subsection{\texorpdfstring{Iteration Step $K$ of AE Generation}{Iteration Step K of AE Generation}}
\label{iteraAE}
The generation iteration step $K$ has an extremely important impact on the model's robust accuracy. As shown in Figure~\ref{iteration}, the larger the iteration step, the adversarial robustness against AA of the model rises with the increase of $K$. However, the larger $K$ will significantly increase the time consumption of AT. To achieve a tradeoff between the model's robustness and time consumption, we set the iteration step as $K=5$. Then, the time consumption of DAT will not be increased compared with existing methods.

\begin{figure}
    \centering
    \includegraphics[width=0.45\linewidth]{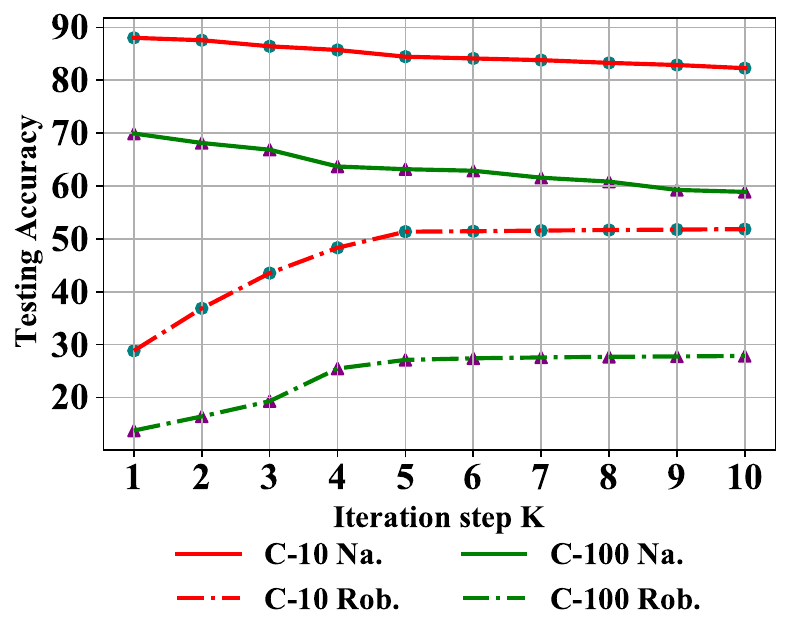}
    \caption{The impact of different iteration step $K$ on the model's performance on natural data and against AA on CIFAR-10 and CIFAR-100 with ResNet-18.}
    \label{iteration}
\end{figure}

\subsection{AAG with Existing AT Methods}
\label{AAGother}
To further show the effectiveness of the proposed AAG, we combine AAG with PGD-AT and TRADES and perform experiments on CIFAR-10 and CIFAR-100 against PGD-20 and AA. Compared with PGD-AT and TRADES, the combinations with AAG enhance the model's robust and natural accuracy further.

\begin{table*}[!htb]
\caption{Average experimental results of AAG with existing AT methods on CIFAR-10/100 with ResNet-18 and $\epsilon=\nicefrac{8}{255}$ in 7 runs.}
\label{AAGrecombie}
\centering
\resizebox{\linewidth}{!}{
\begin{tabular}{lcccccc}
\toprule

\multirow{2}*{\diagbox{\sc \textbf{Method}}{\sc \textbf{Dataset}}} & \multicolumn{3}{c}{CIFAR-10} &\multicolumn{3}{c}{CIFAR-100}\\
\cmidrule(lr){2-4} \cmidrule(lr){5-7}
&Natural&PGD-20&AA &Natural&PGD-20&AA\\ 
\midrule
     PGD-AT & 82.78$\pm$0.12             & 51.30$\pm$0.16 & 47.63$\pm$0.08      & 57.27$\pm$0.21             & 28.66$\pm$0.11 & 24.60$\pm$0.04          \\   
    
    PGD-AT+AAG &   \textbf{83.24$\pm$0.21}   &  \textbf{55.64$\pm$0.18} &     \textbf{50.96$\pm$0.14} &   \textbf{59.34$\pm$0.23}   &  \textbf{31.82$\pm$0.25} &     \textbf{26.16$\pm$0.16} \\
\cmidrule(lr){1-7}
    TRADES & 82.41$\pm$0.12             & 52.76$\pm$0.19 & 49.37$\pm$0.08        & 57.94$\pm$0.15             & 29.25$\pm$0.18 & 24.71$\pm$0.04         \\ 
TRADES+AAG &   \textbf{84.36$\pm$0.38}   &  \textbf{55.45$\pm$0.42} &     \textbf{50.75$\pm$0.27} &   \textbf{60.18$\pm$0.24}   &  \textbf{32.43$\pm$0.22} &     \textbf{26.76$\pm$0.17} \\
\bottomrule

\end{tabular}
}
\end{table*}

\begin{table*}[!htb]
\caption{Average experimental results of different inputs of $G_{\Bpsi}$ with ResNet-18 and $\epsilon=\nicefrac{8}{255}$ in 7 runs.}
\centering
\label{ginput}
\resizebox{\linewidth}{!}{
\begin{tabular}{lcccccc}
\toprule
\multirow{2}*{\diagbox{\sc \textbf{Method}}{\sc \textbf{Dataset}}} & \multicolumn{3}{c}{CIFAR-10} & \multicolumn{3}{c}{CIFAR-100}\\
\cmidrule(lr){2-4} \cmidrule(lr){5-7} 
&Natural&PGD-20&AA &Natural&PGD-20&AA\\ 
\midrule
     w/ $\mathbf{z}$ & 83.63$\pm$0.63             & 56.86$\pm$0.64 & 50.67$\pm$0.61   & 61.52$\pm$0.72   & 32.74$\pm$0.55  & 26.64$\pm$0.43              \\

    w/ $\mathbf{z}$ + one-hot label & 83.86$\pm$0.33             & 57.27$\pm$0.39 & 50.92$\pm$0.38      & 62.05$\pm$0.32   & 32.96$\pm$0.35  & 26.87$\pm$0.28           \\ 
w/ $\mathbf{z}$ + logits &   \textbf{84.17$\pm$0.21}   &  \textbf{57.55$\pm$0.15} &     \textbf{51.36$\pm$0.14}  & \textbf{62.57$\pm$0.17}           &  \textbf{33.37$\pm$0.15}   & \textbf{27.11$\pm$0.15}\\
\bottomrule 

\end{tabular}
}
\end{table*}

\subsection{\texorpdfstring{Impact of Different Inputs of $G_{\Bpsi}$}{Impact of Different Inputs of G with psi}}
\label{AAGlatent}

For the Generative Adversarial Nets (GANs), Conditional GANs (CGANs) \cite{DBLP:journals/corr/MirzaO14} usually have higher quality generative results than vanilla ones. Consequently, motivated by CGAN, we use a combination of the sample logits from model $f_{\Btheta}$ and stochastic sampling noise vector $\mathbf{z}\sim \mathcal{N}(\mathbf{0},\mathbf{I})$ following a standard Gaussian distribution as input of $G_{\Bpsi}$, which is different CGAN with one-hot label. To better show the effectiveness, we perform experiments on CIFAR-10 and CIFAR-100 against the PGD-20 and AA. Table~\ref{ginput} shows that the input of $G_{\Bpsi}$ significantly influences the model's natural and robust accuracy. Moreover, the performance of input with logits has a smaller variance than in other cases, indicating the logits make the training runs more stable. As shown in Figure~\ref{inaag} (see page~\pageref{inaag}), the gaps between the generated amplitude spectrum of different inputs are quite small. However, the recombination from the input with $\mathbf{z}$ and logits have a small gap for benign samples, with no significantly changed sample.

\subsection{Single AE Generation of DAT}

In the training procedure of DAT, we take both benign and recombined samples' AEs into AT. To reduce the time consumption of AE generation, we propose an efficient strategy to produce AEs. We can also generate AE of recombined data by mixing the generated amplitude spectrum with that in the benign sample's AE. For an example, with $(\mathbf{x},y)\in\mathcal{D}$, we use the $\mathbf{x}$'s AE $\mathbf{x}^{\prime}$ to obtain the amplitude of recombined AE as
\begin{equation}
    \mathcal{A}_{mix}(\mathbf{x}^{\prime})=\lambda\mathcal{A}_{G}(\mathbf{x}^{\prime})+(1-\lambda)\mathcal{A}(\mathbf{x}^{\prime}).
\end{equation}
Then, $\mathbf{\hat{x}}^{\prime}$ is obtained by
\begin{equation}
    \mathbf{\hat{x}}^{\prime}=\mathcal{F}^{-1}(\mathcal{A}_{mix}(\mathbf{x}^{\prime}),\mathcal{P}(\mathbf{x}^{\prime})).
\end{equation}

\begin{table*}[!htb]
\caption{Average experimental results with single and dual AE on CIFAR-10 and CIFAR-100 with ResNet-18 and $\epsilon=\nicefrac{8}{255}$ in 7 runs.}
\label{singleAE}
\centering
\resizebox{\linewidth}{!}{
\begin{tabular}{lcccccc}
\toprule

\multirow{2}*{\diagbox{\sc \textbf{Method}}{\sc \textbf{Dataset}}} & \multicolumn{3}{c}{CIFAR-10} & \multicolumn{3}{c}{CIFAR-100}\\
\cmidrule(lr){2-4} \cmidrule(lr){5-7} 
&Natural&PGD-20&AA &Natural&PGD-20&AA\\ 
\midrule
     
   Single AE           & 84.28$\pm$0.26            & 55.78$\pm$0.35 & 50.43$\pm$0.32      & 62.64$\pm$0.34   & 31.64$\pm$0.37  & 26.38$\pm$0.24           \\                    

Dual AE &   \textbf{84.17$\pm$0.21}   &  \textbf{57.55$\pm$0.15} &     \textbf{51.36$\pm$0.14} & \textbf{62.57$\pm$0.17}           &  \textbf{33.37$\pm$0.15}   & \textbf{27.11$\pm$0.15} \\
\bottomrule
\end{tabular}
}
\end{table*}

We call such a procedure to obtain recombined data and AE a single AE DAT, while the process described in the main text is Dual AE DAT. To explore the difference in performance between the two AE generation strategies, we perform experiments with the same experimental settings in Sec.~\ref{traset} on CIFAR-10 and CIFAR-100 with ResNet-18. The PGD-20 and AA are selected to evaluate the model's performance with single or dual AE generation. As shown in Table~\ref{singleAE}, we report the natural and robust results. According to these experiments, the natural accuracy gap for the DAT with single and dual AE is small, while the difference for the robust performance is large. As mentioned in the work, the amplitude spectrum of the sample influences the model's learning of the phase patterns. The single AE of DAT does not reflect the impact of mixed amplitude $\mathbf{\hat{x}}$ on adversarial perturbation, resulting in a lower robust performance than dual AE for DAT. Additionally, as shown in Table~\ref{ablation}, although the model trained with single AE has a lower robust performance than common DAT, it still enhances the model's robustness compared to the baseline. That indicates the single AE for DAT can also enforce the model to learn more robust phase patterns.

\subsection{Limitations and Future Works}
\label{plimits}
Although the proposed DAT significantly improves the model's robustness and natural accuracy of the model, there are still some limitations of the work. 
Since the robust over-fitting issues, DAT without AWP needs to be trained in a limited epoch, restricting its performance. Moreover, our DAT focused on protecting from adversarial attacks. Other types of image corruptions, \emph{e.g.}, Gaussian noise, and defocus blur, can also influence the model's performance. DAT needs to be developed further to protect the model from various image corruptions. Due to the limitation of page length, we only provide the experimental results on three frequently used datasets and deep models in the main paper. In the future, we will add experiments on more large-scale datasets with different deep models. 

\newpage
\section{Theoretical Analysis and Proof}
\label{sec:proofs}
Here we restate and prove Theorem~\ref{prop1}.
\thm*

We consider a quadratic approximation of the objective in Theorem~\ref{prop1}, and provide a further and extended analysis following \cite{DBLP:conf/icml/DaoGRSSR19,DBLP:journals/corr/abs-1909-09148}. Using the second-order Taylor expansion, we expand each term of the objective function:
\begin{equation}\label{eq:taylor expansion}
\mathbb{E}_{t(\Bx)\sim\mathcal{T}(\Bx)} \left[\mathcal{L}\left(\mathbf{W}^{\top}\mathbf{h}(t(\Bx)),y\right)\right]=\mathcal{L}(\mathbf{W}^{\top} \bar{\mathbf{h}},y)+\frac{1}{2}\mathbb{E}_{t(\Bx)\sim\mathcal{T}(\Bx)}\left[\Delta^{\top}\mathbf{H}(\zeta,y) \Delta\right],
\end{equation}
where $\left.\bar{\mathbf{h}}=\mathbb{E}_{t(\Bx)\sim\mathcal{T}(\Bx)}\left[\mathbf{h}(t(\mathbf{x}))\right]\right.$, $\left.\Delta=\mathbf{W}^{\top}\left(\mathbf{h}(t(\mathbf{x}))-\bar{\mathbf{h}}\right)\right.$, and $\mathbf{H}$ is the Hessian matrix with $\zeta$ denoting the remainder. We introduce Lemma~\ref{lem:hessian} to demonstrate the properties of $\mathbf{H}$.

\begin{lem}\label{lem:hessian}
For the CE loss with softmax, the Hessian matrix \wrt the loss function satisfies
\begin{enumerate}[leftmargin=*]
    \item $\mathbf{H}\succeq0$, i.e., $\mathbf{H}$ is semi-definite.
    \item $\mathbf{H}$ is independent of the true label vector $\mathbf{y}$.
\end{enumerate}
\end{lem}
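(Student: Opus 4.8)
The plan is to compute the Hessian $\mathbf{H}$ in closed form and read off both properties directly. Note first that in the quadratic approximation of Eq.~\eqref{eq:taylor expansion} the increment $\Delta = \mathbf{W}^{\top}(\Bh(t(\Bx)) - \bar{\Bh})$ lives in the logit space $\BBR^{c}$, so the relevant Hessian $\mathbf{H}$ is that of the CE loss with respect to the logit vector $\mathbf{s} = \mathbf{W}^{\top}\Bh \in \BBR^{c}$, not with respect to $\mathbf{W}$ or $\Bh$. Writing the softmax outputs as $p_i = e^{s_i}/\sum_{j} e^{s_j}$ and using $\sum_i y_i = 1$ for the one-hot label vector $\mathbf{y}$, the CE loss simplifies to
\begin{equation*}
\CL(\mathbf{s},\mathbf{y}) = -\sum_{i} y_i s_i + \log\Big(\sum_{j} e^{s_j}\Big),
\end{equation*}
in which the label appears only through the first term, which is linear in $\mathbf{s}$.

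Differentiating twice is then routine. The gradient is $\partial \CL/\partial s_k = p_k - y_k$, and a second differentiation yields
\begin{equation*}
H_{kl} = \frac{\partial^{2}\CL}{\partial s_k \partial s_l} = p_k(\delta_{kl} - p_l), \qquad \text{equivalently} \qquad \mathbf{H} = \mathrm{diag}(\mathbf{p}) - \mathbf{p}\mathbf{p}^{\top},
\end{equation*}
where $\mathbf{p} = [p_1,\dots,p_c]^{\top}$. Property~(2) is immediate from this: the only $\mathbf{y}$-dependent part of $\CL$ is linear in $\mathbf{s}$, so it drops out under the second derivative, and $\mathbf{H}$ depends on $\mathbf{s}$ (equivalently on $\mathbf{p}$) alone. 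In particular, the apparent $y$-dependence in the notation $\mathbf{H}(\zeta,y)$ of Eq.~\eqref{eq:taylor expansion} is spurious.

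For Property~(1), I would establish positive semi-definiteness through a variance identity. For an arbitrary $\mathbf{v}\in\BBR^{c}$,
\begin{equation*}
\mathbf{v}^{\top}\mathbf{H}\mathbf{v} = \sum_{k} p_k v_k^{2} - \Big(\sum_{k} p_k v_k\Big)^{2}.
\end{equation*}
Since $\mathbf{p}$ is a probability vector ($p_k\ge 0$, $\sum_k p_k = 1$), viewing $\mathbf{v}$ as a random variable $V$ that takes the value $v_k$ with probability $p_k$ identifies the right-hand side as $\mathbb{E}[V^{2}] - (\mathbb{E}[V])^{2} = \mathrm{Var}(V) \ge 0$ (equivalently, this is Jensen's inequality for the convex map $t\mapsto t^{2}$, or Cauchy--Schwarz). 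Hence $\mathbf{v}^{\top}\mathbf{H}\mathbf{v}\ge 0$ for all $\mathbf{v}$, so $\mathbf{H}\succeq 0$; since the bound holds at every logit value it holds in particular at the remainder point $\zeta$.

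The derivation is entirely standard, so I do not expect a genuine obstacle. The only points needing care are bookkeeping ones: confirming that the Hessian is differentiated with respect to the logits (so that the quadratic form $\Delta^{\top}\mathbf{H}\Delta$ is the correct object matching $\Delta$), and invoking $\sum_i y_i = 1$ so that the log-partition term is the sole source of curvature and the label contribution is purely linear.
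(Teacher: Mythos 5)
Your proposal is correct, and up through the computation of the Hessian it follows exactly the same route as the paper: differentiate the CE loss with respect to the logits to get $\partial \mathcal{L}/\partial s_k = p_k - y_k$ and $H_{kl} = p_k(\delta_{kl}-p_l)$, then read off label-independence from the fact that $\mathbf{y}$ has vanished (your observation that the label enters only through a term linear in $\mathbf{s}$ is a cleaner way to say the same thing). Where you genuinely diverge is the positive-semi-definiteness step, and your version is the better one. The paper asserts the matrix identity $\mathbf{H} = \mathbf{P}(\mathbf{I}-\mathbf{P})$ with $\mathbf{P}$ \emph{diagonal} with entries $p_k$, and concludes PSD from the entries lying in $[0,1]$; but as written that product is a diagonal matrix with entries $p_k(1-p_k)$, which does not reproduce the off-diagonal terms $-p_kp_j$ of the Hessian just computed, so the paper's factorization (and hence its PSD justification) does not actually hold for the matrix in question. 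The correct matrix form is the one you wrote, $\mathbf{H} = \mathrm{diag}(\mathbf{p}) - \mathbf{p}\mathbf{p}^{\top}$, and your quadratic-form argument
\begin{equation*}
\mathbf{v}^{\top}\mathbf{H}\mathbf{v} \;=\; \sum_{k} p_k v_k^{2} - \Big(\sum_{k} p_k v_k\Big)^{2} \;=\; \mathrm{Var}(V) \;\ge\; 0
\end{equation*}
(with $V$ taking value $v_k$ with probability $p_k$) is the standard rigorous repair; it buys an airtight proof where the paper's is at best a notational shortcut. Your framing remarks are also sound: the Hessian must indeed be taken with respect to the logits so that it matches $\Delta = \mathbf{W}^{\top}(\Bh(t(\Bx))-\bar{\Bh})$ in Eq.~\eqref{eq:taylor expansion}, and the conclusion that the $y$-argument in $\mathbf{H}(\zeta,y)$ is spurious is precisely Property~(2) of the lemma.
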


\begin{proof}
The CE loss for a single instance $\Bx$, when given the true label vector $\mathbf{y}$ (using one-hot encoding with $c$-dimension) is defined as
\begin{equation}
    L=-\sum_{i=1}^c y_i \log (p_i).
\end{equation}
The gradient of $L$ \wrt the logits is computed as
\begin{equation}
    \frac{\partial L}{\partial z_k}=p_k -y_k,
\end{equation}
for $k\in[c]$, where $p_k=\sigma(\mathbf{z})_k$ is the predicted probability of class $k$. The Hessian matrix $\mathbf{H}$, which is the matrix of second derivatives of $L$, has elements given by
\begin{equation}
    \frac{\partial^2 L}{\partial z_k \partial z_j}=\frac{\partial}{\partial z_j}(p_k-y_k)=\frac{\partial p_k}{\partial z_j}.
\end{equation}
Using the derivation properties of the softmax function, the above derivative simplifies to
\begin{equation}
    \frac{\partial p_k}{\partial z_j}=p_k(\delta_{kj} -p_j),
\end{equation}
where $\delta_{kj}$ is the Kronecker delta, equal to 1 if $k=j$ and 0 otherwise. Thus, the Hessian matrix $\mathbf{H}$ is
\begin{equation}
    H_{kj}=p_k(\delta_{kj}-p_j).
\end{equation}

\textbf{Positive Semi-Definiteness.}\hspace{.5em}The Hessian matrix can be expressed in matrix form as
\begin{equation}
    \mathbf{H}=\mathbf{P}(\mathbf{I}-\mathbf{P}),
\end{equation}
where $\mathbf{P}$ is a diagonal matrix with diagonal entries $p_k$, and $\mathbf{I}$ is the identity matrix. The product of $\mathbf{P}$ and $\mathbf{I}-\mathbf{P}$, where $\mathbf{P}$ is a diagonal matrix with entries between 0 and 1, ensures that $\mathbf{H}$ is positive semi-definite.

\textbf{Independence from Label $\mathbf{y}$.}\hspace{.5em}The structure and values of $\mathbf{H}$ depend solely on the predicted probabilities $\mathbf{p}$, which are functions of $\mathbf{z}$ and not dependent on the specific values of $\mathbf{y}$. This characteristic signifies that the Hessian's form is invariant \wrt the true label vector $\mathbf{y}$, thus highlighting its independence.
\end{proof}

In the following, we omit the subscript of the expectation in Eq.~\eqref{eq:taylor expansion} for simplicity. According to \cite{chen2024rethinking}, when minimizing $\hat{R}$, the first term in RHS of Eq.~\eqref{eq:taylor expansion} may be no room for further improvement because the local optima has nearly the same value as the global optimum in neural networks, and we put our main focus on the second term.
We first derive an upper bound for the second term in Lemma~\ref{lem:upper-bound}, since directly optimizing it which requires a third-order derivative is intractable.
\begin{lem}[stated informally, \cf Theorem~3.1 in \cite{chen2024rethinking}]\label{lem:upper-bound}
Assume that the covariance between $\|\mathbf{H}\|_F$ and $\|\Delta\|_2$ is zero, we can get the upper bound of the second term as
\begin{equation}\label{eq:ineq}
    \mathbb{E}\left[\Delta^{\top}\mathbf{H}\Delta\right]\leq\mathbb{E}\left[\|\mathbf{H}\|_F\right]\cdot\mathbb{E}\left[\|\Delta\|_2^2\right].
\end{equation}
\end{lem}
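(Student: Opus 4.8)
The plan is to prove the inequality pointwise for each sampled augmentation $t(\Bx)$ and only then pass to expectations, reserving the covariance hypothesis for the very last factorization step. First I would use the structural facts about $\mathbf{H}$ from Lemma~\ref{lem:hessian}. Since $\mathbf{H}\succeq0$ (this holds at the intermediate point $\zeta$ appearing in Eq.~\eqref{eq:taylor expansion}, because the positive semi-definiteness in Lemma~\ref{lem:hessian} is valid for every logit configuration), $\mathbf{H}$ is symmetric with nonnegative eigenvalues, so the Rayleigh-quotient bound gives $\Delta^{\top}\mathbf{H}\Delta\le\lambda_{\max}(\mathbf{H})\,\|\Delta\|_2^2=\|\mathbf{H}\|_2\,\|\Delta\|_2^2$, where $\|\mathbf{H}\|_2$ denotes the spectral norm.

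Next I would relate the spectral norm to the Frobenius norm. For the symmetric matrix $\mathbf{H}$ one has $\|\mathbf{H}\|_F=\left(\sum_k\lambda_k^2\right)^{1/2}$, so $\|\mathbf{H}\|_2=\max_k|\lambda_k|\le\|\mathbf{H}\|_F$. Chaining this with the Rayleigh bound yields the pointwise estimate $\Delta^{\top}\mathbf{H}\Delta\le\|\mathbf{H}\|_F\,\|\Delta\|_2^2$, valid for every realization of $t(\Bx)\sim\mathcal{T}(\Bx)$. Taking the expectation over the augmentation distribution and using monotonicity then gives $\mathbb{E}\left[\Delta^{\top}\mathbf{H}\Delta\right]\le\mathbb{E}\left[\|\mathbf{H}\|_F\,\|\Delta\|_2^2\right]$.

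The final step is to factorize the right-hand side, which is where the stated hypothesis enters. Reading the assumption that $\|\mathbf{H}\|_F$ and $\|\Delta\|_2$ are uncorrelated as (functional) independence, $\|\mathbf{H}\|_F$ is then uncorrelated with the deterministic function $\|\Delta\|_2^2$ as well, so $\mathrm{Cov}\!\left(\|\mathbf{H}\|_F,\|\Delta\|_2^2\right)=0$ and therefore $\mathbb{E}\left[\|\mathbf{H}\|_F\,\|\Delta\|_2^2\right]=\mathbb{E}\left[\|\mathbf{H}\|_F\right]\cdot\mathbb{E}\left[\|\Delta\|_2^2\right]$, which is exactly Eq.~\eqref{eq:ineq}.

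The main obstacle I anticipate is precisely this decoupling: the hypothesis is phrased in terms of $\|\Delta\|_2$, whereas the integrand actually carries $\|\Delta\|_2^2$. To close the gap cleanly I would either strengthen the assumption to genuine independence (which transfers to any measurable function of $\|\Delta\|_2$, in particular its square) or restate it directly as zero covariance between $\|\mathbf{H}\|_F$ and $\|\Delta\|_2^2$. The two norm inequalities are routine linear algebra once Lemma~\ref{lem:hessian} supplies $\mathbf{H}\succeq0$; the real modelling content—and the part that deserves the "stated informally" caveat inherited from \cite{chen2024rethinking}—is the claim that the magnitude of the Hessian and that of the feature deviation $\Delta$ fluctuate independently across the augmentations in $\mathcal{T}(\Bx)$.
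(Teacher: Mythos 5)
Your proof is correct and takes essentially the same route as the paper's: a pointwise bound of the quadratic form by $\|\mathbf{H}\|_2\,\|\Delta\|_2^2\leq\|\mathbf{H}\|_F\,\|\Delta\|_2^2$ (the paper obtains it via H\"{o}lder's inequality with $p=q=r=2$, i.e.\ Cauchy--Schwarz plus the induced-norm bound, rather than your Rayleigh-quotient argument, and factors the expectation mid-proof instead of at the end), followed by the covariance-based factorization. Your closing caveat is also on target: the paper's own factorization step $\mathbb{E}\left[\|\Delta\|_{p}\|\mathbf{H}\|_{r,q}\|\Delta\|_{r}\right]=\mathbb{E}\left[\|\mathbf{H}\|_{r,q}\right]\cdot\mathbb{E}\left[\|\Delta\|_{p}\|\Delta\|_{r}\right]$ implicitly requires zero covariance of the matrix norm with $\|\Delta\|_2^2$ rather than with $\|\Delta\|_2$, the same looseness you flagged, which the paper absorbs under its ``stated informally'' label.
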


\begin{proof}
By Hölder's inequality, we have
\begin{equation}
\mathbb{E}\left[\Delta^{\top}\mathbf{H}\Delta\right]=\mathbb{E}\left[\|\Delta\|_{p}\|\mathbf{H}\Delta\|_{q}\right]\leq \mathbb{E}\left[\|\Delta\|_{p}\|\mathbf{H}\|_{r,q}\|\Delta\|_{r}\right]=\mathbb{E}\left[\|\mathbf{H}\|_{r,q}\right]\cdot\mathbb{E}\left[\|\Delta\|_{p}\|\Delta\|_{r}\right],
\end{equation}
where $\nicefrac{1}{p}+\nicefrac{1}{q}=1$, $\|\cdot\|_{r,q}$ is an induced matrix norm. When $p=q=r=2$, we have
\begin{equation}
    \mathbb{E}\left[\Delta^{\top}\mathbf{H}\Delta\right]\leq\mathbb{E}\left[\|\mathbf{H}\|_2\right]\cdot\mathbb{E}\left[\|\Delta\|_2^2\right]\leq\mathbb{E}\left[\|\mathbf{H}\|_F\right]\cdot\mathbb{E}\left[\|\Delta\|_2^2\right].
\end{equation}
\end{proof}
Intuitively, since $\|\mathbf{H}\|_F$ represents the sharpness/flatness of loss landscape, and $\|\Delta\|_2$ describes the translation of landscape, the two terms can be assumed independent, where their covariance can be assumed as zero \cite{chen2024rethinking}. Consider the latter item in RHS of Eq.~\eqref{eq:ineq}:
\begin{equation}
\begin{split}
    \mathbb{E}\left[\|\Delta\|_2^2\right]&=\mathbb{E}\left[\Delta^{\top}\Delta\right]\\
    &=\mathbb{E}\left[\left(\mathbf{h}(t(\mathbf{x}))-\bar{\mathbf{h}}\right)^{\top}\mathbf{W}\mathbf{W}^{\top}\left(\mathbf{h}(t(\mathbf{x}))-\bar{\mathbf{h}}\right)\right]\\
    &=\mathbb{E}\left[\mathrm{tr}\left(\mathbf{W}\mathbf{W}^{\top}\left(\mathbf{h}(t(\mathbf{x}))-\bar{\mathbf{h}}\right)\left(\mathbf{h}(t(\mathbf{x}))-\bar{\mathbf{h}}\right)^{\top}\right)\right]\\
    &=\mathrm{tr}\left(\mathbf{W}\mathbf{W}^{\top}\mathbb{E}\left[\left(\mathbf{h}(t(\mathbf{x}))-\bar{\mathbf{h}}\right)\left(\mathbf{h}(t(\mathbf{x}))-\bar{\mathbf{h}}\right)^{\top}\right]\right)\\
    &=\mathrm{tr}\left(\mathbf{W}\mathbf{W}^{\top}\mathbf{\Sigma}\right),
\end{split}
\end{equation}
where $\mathbf{\Sigma}$ is the covariance matrix of $\mathbf{h}(t(\mathbf{x}))$. For a model trained well enough, the feature extractor can completely distinguish different features, \ie, $\mathbf{\Sigma}$ is a diagonal matrix. Then we can obtain
\begin{equation}
\mathrm{tr}\left(\mathbf{W}\mathbf{W}^{\top}\mathbf{\Sigma}\right)=\mathrm{tr}\left(\sum_{i=1}^{c}\mathbf{w}_i\mathbf{w}_i^{\top}\mathbf{\Sigma}\right)=\sum_{i=1}^{m}\sum_{j=1}^{c}w^2_{j,i}\mathrm{Var}\left[h_i(t(\mathbf{x}))\right],
\end{equation}
where $w_{j,i}$ is the $j$-th entry of $\mathbf{w}_i$. If the variance of the feature $h_{i}(t(\mathbf{x}))$ is large, minimizing the empirical risk $\hat{R}$ requires $\left.\sum_{j=1}^{c}w^2_{j,i}\to 0 \implies w_{j,i}\to 0\right.$ for all $j\in[c]$. By Assumption~\ref{ass:ass1}, it is trivial that $\mathrm{Var}[h_a(t(\mathbf{x}))]>\mathrm{Var}[h_p(t(\mathbf{x}))]$ and therefore $w_{j,a}\to 0$. Under this circumstance, the model is enforced to learn more phase information. The claim follows.

\newpage
\begin{figure*}[!htbp]
    \centering
    \includegraphics[width=\linewidth]{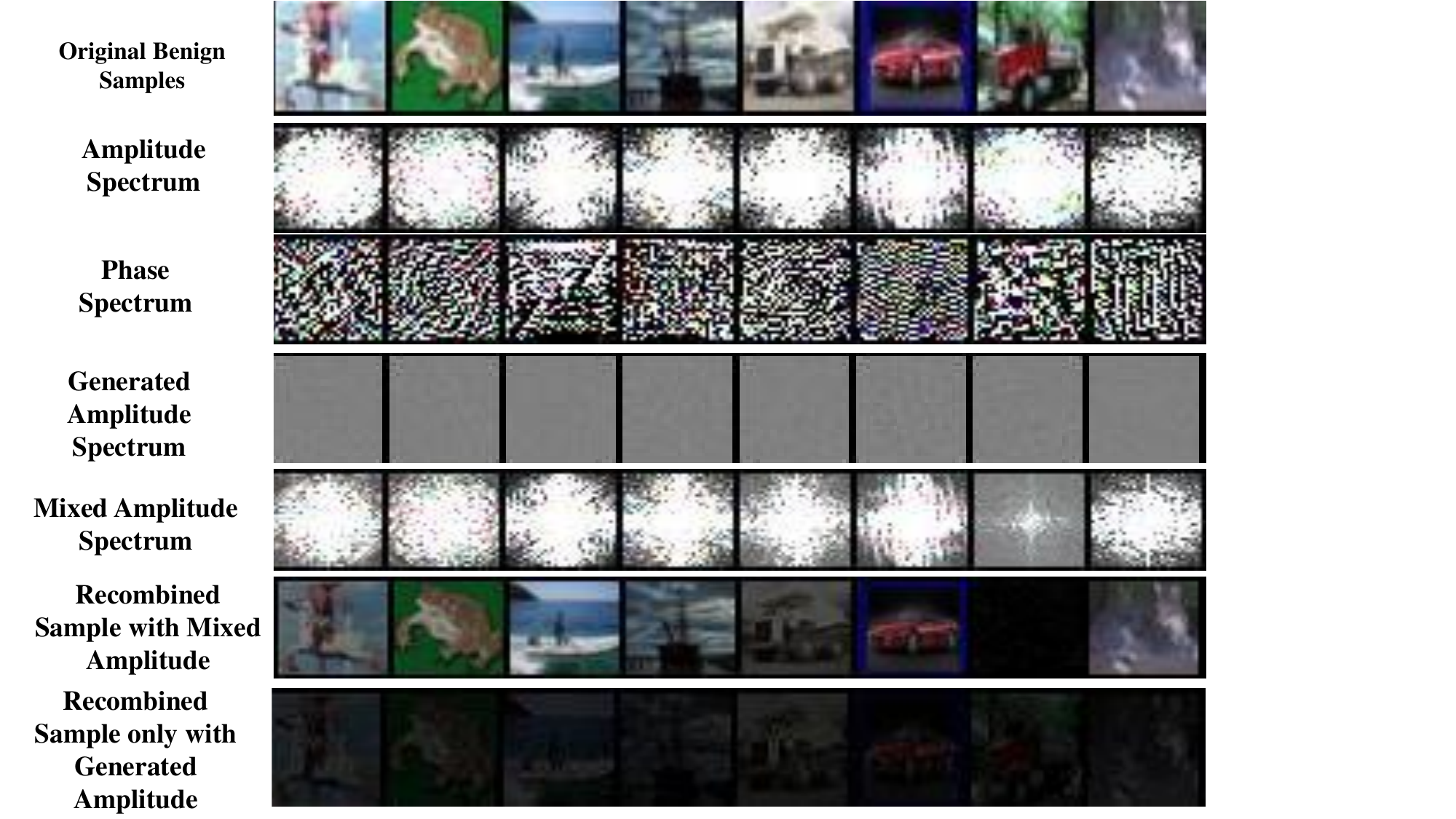}
    \caption{Visualization of recombined samples w/wo mix-up.}
    \label{dft_img}
\end{figure*}

\begin{figure*}[!htbp]
    \centering
    \includegraphics[width=\linewidth]{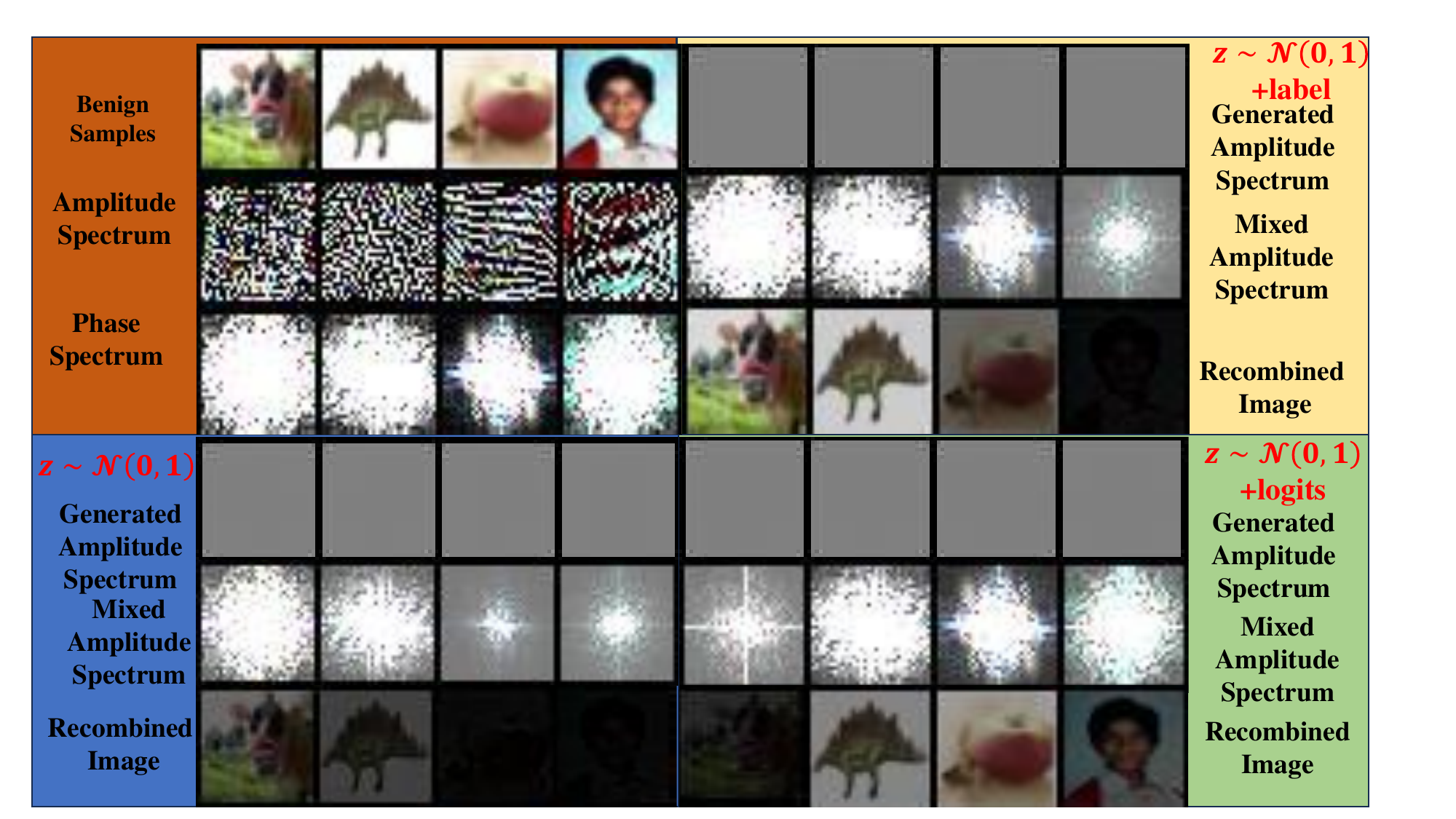}
    \caption{Visualization of recombined samples based on the amplitude spectrum generated by different inputs of AAG.}
    \label{inaag}
\end{figure*}


\newpage
\section*{NeurIPS Paper Checklist}

\begin{enumerate}

\item {\bf Claims}
    \item[] Question: Do the main claims made in the abstract and introduction accurately reflect the paper's contributions and scope?
    \item[] Answer: \answerYes{} 
    \item[] Justification: See Abstract in the preamble and Introduction in Sec.~\ref{intro}.

\item {\bf Limitations}
    \item[] Question: Does the paper discuss the limitations of the work performed by the authors?
    \item[] Answer: \answerYes{} 
    \item[] Justification: See Appendix~\ref{plimits}.

\item {\bf Theory Assumptions and Proofs}
    \item[] Question: For each theoretical result, does the paper provide the full set of assumptions and a complete (and correct) proof?
    \item[] Answer: \answerYes{} 
    \item[] Justification: See Sec.~\ref{tad} and proof in Appendix~\ref{sec:proofs}

    \item {\bf Experimental Result Reproducibility}
    \item[] Question: Does the paper fully disclose all the information needed to reproduce the main experimental results of the paper to the extent that it affects the main claims and/or conclusions of the paper (regardless of whether the code and data are provided or not)?
    \item[] Answer: \answerYes{} 
    \item[] Justification: See Sec.~\ref{sec:exe}

\item {\bf Open access to data and code}
    \item[] Question: Does the paper provide open access to the data and code, with sufficient instructions to faithfully reproduce the main experimental results, as described in supplemental material?
    \item[] Answer: \answerYes{} 
    \item[] Justification: See \url{https://github.com/Feng-peng-Li/DAT}. 

\item {\bf Experimental Setting/Details}
    \item[] Question: Does the paper specify all the training and test details (e.g., data splits, hyperparameters, how they were chosen, type of optimizer, etc.) necessary to understand the results?
    \item[] Answer: \answerYes{} 
    \item[] Justification: See Sec.~\ref{sec:experimental setup}.

\item {\bf Experiment Statistical Significance}
    \item[] Question: Does the paper report error bars suitably and correctly defined or other appropriate information about the statistical significance of the experiments?
    \item[] Answer: \answerYes{} 
    \item[] Justification: See Secs.~\ref{sec:comparison of common methods}, ~\ref{ablation study},~\ref{sec:comparison of complex strategy-based methods}, and Appendix~\ref{sec:Comparison of DAT with Different Augmentations}.

\item {\bf Experiments Compute Resources}
    \item[] Question: For each experiment, does the paper provide sufficient information on the computer resources (type of compute workers, memory, time of execution) needed to reproduce the experiments?
    \item[] Answer: \answerYes{} 
    \item[] Justification: See Sec.~\ref{sec:experimental setup} and Appendix~\ref{ttcc}.
    
\item {\bf Code Of Ethics}
    \item[] Question: Does the research conducted in the paper conform, in every respect, with the NeurIPS Code of Ethics \url{https://neurips.cc/public/EthicsGuidelines}?
    \item[] Answer: \answerYes{} 
    \item[] Justification: following the Code Of Ethics of NeurIPS.

\item {\bf Broader Impacts}
    \item[] Question: Does the paper discuss both potential positive societal impacts and negative societal impacts of the work performed?
    \item[] Answer: \answerNA{} 
    \item[] Justification: Since only open access models, codes, and datasets are used in this work, broader impacts are not applicable.

\item {\bf Safeguards}
    \item[] Question: Does the paper describe safeguards that have been put in place for responsible release of data or models that have a high risk for misuse (e.g., pretrained language models, image generators, or scraped datasets)?
    \item[] Answer: \answerNA{} 
    \item[] Justification: No such risks.

\item {\bf Licenses for existing assets}
    \item[] Question: Are the creators or original owners of assets (e.g., code, data, models), used in the paper, properly credited and are the license and terms of use explicitly mentioned and properly respected?
    \item[] Answer: \answerYes{} 
    \item[] Justification: See the supplemental materials. 

\item {\bf New Assets}
    \item[] Question: Are new assets introduced in the paper well documented and is the documentation provided alongside the assets?
    \item[] Answer: \answerYes{} 
    \item[] Justification: See Sec.~\ref{sec:experimental setup}. 

\item {\bf Crowdsourcing and Research with Human Subjects}
    \item[] Question: For crowdsourcing experiments and research with human subjects, does the paper include the full text of instructions given to participants and screenshots, if applicable, as well as details about compensation (if any)? 
    \item[] Answer: \answerNA{} 
    \item[] Justification: Not involve crowdsourcing nor research with human subjects.

\item {\bf Institutional Review Board (IRB) Approvals or Equivalent for Research with Human Subjects}
    \item[] Question: Does the paper describe potential risks incurred by study participants, whether such risks were disclosed to the subjects, and whether Institutional Review Board (IRB) approvals (or an equivalent approval/review based on the requirements of your country or institution) were obtained?
    \item[] Answer: \answerNA{} 
    \item[] Justification: Not involve crowdsourcing nor research with human subjects.

\end{enumerate}

\end{document}